\newtheorem{theorem}{Theorem}
\newtheorem{proposition}{Proposition}
\DeclareMathOperator*{\argmin}{\arg\!\min}
\begin{document}

%

%

\twocolumn[
\aistatstitle{Rethinking Sharpness-Aware Minimization as Variational Inference}

\aistatsauthor{Szilvia Ujv\'ary$^{*1}$ \And Zsigmond Telek$^{*2}$ \And  Anna Kerekes$^{*1}$ \And Anna Mesz\'aros$^{*1}$ \And Ferenc Husz\'ar$^1$}

\aistatsaddress{} ]

\begin{abstract}
Sharpness-aware minimization (SAM) aims to improve the generalisation of gradient-based learning by seeking out flat minima. In this work, we establish connections between SAM and Mean-Field Variational Inference (MFVI) of neural network parameters. We show that both these methods have interpretations as optimizing notions of flatness, and when using the reparametrisation trick, they both boil down to calculating the gradient at a perturbed version of the current mean parameter.
This thinking motivates our study of algorithms that combine or interpolate between SAM and MFVI. We evaluate the proposed variational algorithms on several benchmark datasets, and compare their performance to variants of SAM. Taking a broader perspective, our work suggests that SAM-like updates can be used as a drop-in replacement for the reparametrisation trick.
\end{abstract}

\section{Introduction}

The training loss of overparametrized models often has several equally low minima, which may differ widely in their performance at test time. The most salient effort to characterise which minima lead to better generalisation is the one that links \emph{flatness} to generalisation \citep{HochreiterSchmidhuber1997, Keskar2016, Chaudhari16, DziugaiteR17, Dinh17, Jiang21}.
Motivated by this intuition, Sharpness Aware Minimization (SAM) aims to directly bias optimization towards flat regions. Its simple and intuitive notion of flatness gives rise to an algorithm that substantially improves generalisation while only doubling the computational cost per iteration.

Other methods, such as Mean-Field Variational Inference (MFVI) and Evolution Strategies (ES) also have \emph{flatness-seeking} interpretations \citep{ES}. Indeed, the pursuit of flat minima was originally motivated by a Bayesian, minimum description length (MDL) argument \citep{HochreiterSchmidhuber1997}. Thus, it is not surprising that minimizing a variational Bayesian objective, itself an estimate of the description length of training data \citep{miracle}, favours flat minima. This connection between variational Bayesian deep learning and SAM has remained largely unexplored.

In this work we show that the connections between MFVI and SAM are more than philosophical. Concrete implementations of SAM and MFVI (with the reparametrization trick) both calculate gradients at a perturbed point around the current parameter. While SAM takes the worst-case perturbation, MFVI uses a random Gaussian perturbation. In this sense, the SAM update can be thought of as a biased deterministic approximation replacing the unbiased but high-variance single-sample Monte Carlo gradient in MFVI. Exploring this further, we  make the following contributions:

\begin{enumerate}[nosep]
    \item We establish connections between SAM and MFVI, characterising and comparing their flatness-seeking inductive biases (Section \ref{MethodAnalysis}).
    \item We propose the algorithm VariationalSAM, which combines aspects of SAM and MFVI.
    \item We compare all methods on benchmark datasets (CIFAR-10 and CIFAR-100).
\end{enumerate}





\section{Sharpness-Aware Minimization (SAM)}

\begin{table*}[h]
    \centering
    \begin{tabular}{l|c|cc|c}
        Name & perturbation & covariance & $\Sigma=$ & Penalty \\
        \hline
        \hline
        SAM\,\citep{SAM} & worst-case & fixed & $\frac{\rho^2}{p}I$ & $\operatorname{KL =}L_2$ \\
        \hline
        Random SAM (MFVI $\mu$ only) & Gaussian & fixed & $\frac{\rho^2}{p}I$ & $\operatorname{KL}$ \\
        \hline
        MFVI \citep{HintonvanCamp} & Gaussian & learned & $\operatorname{diag}(\sigma_i)$ & $\operatorname{KL}$ \\
        \hline
        Variational SAM & worst-case & learned & $\operatorname{diag}(\sigma_i)$ & $\operatorname{KL}$\\
        \hline
        Adaptive SAM\,\citep{ASAM} & worst-case & $\mu$-adaptive &   $\operatorname{diag}(\vert\frac{1}{\mu_i}\vert)$ & $L_2$ \\
        \hline
        Fisher SAM\,\citep{FSAM} & worst-case & $\mu$-adaptive &  $\operatorname{diag}(F(\mu))$ & $L_2$ \\
        \hline
        \makecell[l]{Evolution Strategy\,(ES)\, \\ \citep{ES, Beyer2002} } & Gaussian & fixed & $\frac{\rho^2}{p}I$ & none \\
        \hline
        \makecell[l]{CMA-ES\,\citep{CMAES}, \\ VO\,\citep{VO}, \\ NES\,\citep{NES}} & Gaussian & learned & $\operatorname{diag}(\sigma_i)$ & none \\
        \end{tabular}
    \caption{An overview of methods mentioned in this work. All methods can be related to MFVI or SAM by changing the perturbation type, shape of $\Sigma$, or penalty terms.}
    \label{tab:methods}
\end{table*}

In this section, we present SAM with a slight change in notation relative to original works in order to make connections to variational methods more apparent later. Denoting the parameter by $\mu$, and the number of parameters by $p$, the idealized loss function SAM aims to minimize can be written as

\begin{flalign}
    \begin{aligned}
       L_{\text{SAM}} (\mu, \Sigma) &=  \max_{\epsilon^T \Sigma^{-1} \epsilon\leq p} \left[L(\mu + \epsilon)-L(\mu)\right] \\
       &+ L(\mu) + \alpha \|\mu\|_2^2\label{eqn:SAM_ideal} 
    \end{aligned}
\end{flalign}

Here, $L$ is the average training loss, $\Sigma$ is any symmetric positive definite matrix. Setting $\Sigma = \frac{\rho^2}{p}I$, where $\rho$ is a constant scalar parameter recovers the standard SAM \citep{SAM}, while setting $\Sigma$ adaptively as a function of $\mu$ recovers newer variants Adaptive SAM (ASAM) and Fisher SAM (FSAM) \citep{ASAM, FSAM}. The parameter $\alpha>0$ controls the strength of $L_2$ regularization. 

Computing gradients of (Eqn.\  \eqref{eqn:SAM_ideal}) with respect to $\mu$ is intractable in general, hence in practice an approximation is used. Taking a first-order Taylor expansion of $L$ we can approximate the optimal $\epsilon$ as follows \citep{SAM, ASAM, FSAM}:

\begin{equation}
    \epsilon^* (\mu)= \sqrt{p} \frac{\Sigma \nabla^\top_\theta L(\mu)}{\sqrt{\nabla^\top_\theta L(\mu) \Sigma \nabla_\theta L(\mu)}}
    \label{epsilon}
\end{equation}

where $\nabla^\top_\theta L(\mu)$ denotes the gradient of the loss evaluated at $\mu$. For reasons that will be clear later, we introduce new notation $\eta(\mu)$ and re-express $\epsilon^*$ from $\eta$ as follows:
\begin{equation}
    \epsilon^*(\mu) = \Sigma^\frac{1}{2} \eta(\mu),
    \label{reparam}
\end{equation}
\begin{align*}
     \text{where }\eta(\mu) = \frac{\sqrt{p}\tilde{g}(\mu)}{\|\tilde{g}(\mu)\|_2}
     \text{ and }
     \tilde{g}(\mu) = \Sigma^\frac{1}{2}\nabla^\top_{\theta}L(\mu).
\end{align*}

The SAM update direction is obtained by plugging this estimate $\epsilon^*$ back into (Eqn.\ \eqref{eqn:SAM_ideal}), ignoring the dependence of $\eta$ on $\mu$, and differentiating the resulting expression. This gives the following update direction:

\begin{equation}
\nabla^\top_\mu L_\text{SAM} \approx \nabla^\top_\theta L(\mu + \Sigma^\frac{1}{2}\eta(\mu)) + 2\alpha\mu
\label{eqn:SAM_actual}
\end{equation}

Note that computing this update direction requires backpropagation twice. First, we calculate the gradient at $\mu$ to obtain the worst-case perturbation $\eta(\mu)$. Second to have to evaluate the gradient $\nabla^\top_\theta L$ at the perturbed location $\mu+\Sigma^\frac{1}{2}$. Thus, the computational cost of a single SAM update is about twice that of typical SGD. We note that the SAM update can be used in conjunction with any stochastic optimizer including Adam \citep{Adam}.

\section{Mean-Field Variational Inference (MFVI)\label{sec:MFVI}}

We can also take an approximate Bayesian approach to inferring model parameters from data. In this view, we start with a prior distribution, $p(\theta)$ over model parameters $\theta$, which we choose to be a Gaussian with mean $0$ and standard deviation $\sigma_0$.

We then attempt to infer the posterior distribution of weights using the Bayes' rule

\begin{equation*}
p(\theta\vert \mathcal{D}) \propto e^{-NL(\theta)}\cdot p(\theta),
\end{equation*}
where $N$ denotes the number of data points and we assumed that the loss function $L$ calculates the average negative log likelihood of i.\,i\,.d.\ training data. This calculation being intractable, we try to approximate this by an approximate posterior $q(\theta)$, chosen to be Gaussian with mean $\mu$ and covariance $\Sigma$.

Minimizing the KL-divergence between this approximate $q$ and the true posterior yields the following objective function for $\mu$ and $\Sigma$  \citep{HintonvanCamp}:
\begin{flalign}
    \begin{aligned}
        L_\text{MFVI}(\mu, \Sigma) &= \mathbb{E}_{\theta \sim \mathcal{N}(\mu, \Sigma)} L(\theta) \\
        &+ \frac{1}{N}\operatorname{KL}\left[\mathcal{N}(\mu, \Sigma)\| \mathcal{N}(0, \sigma^2_0I)\right]
    \label{eqn:MFVI_ideal}
    \end{aligned} 
\end{flalign}

\begin{table*}[h]
\centering
\begin{tabular}{l|l|l}
                                                    & SAM                          & MFVI                                               \\ 
\hline
\makecell[l]{Analysis of idealized loss\\(what it says on the box)}          & $L(\mu) + \sqrt{p}||g(\mu)||_\Sigma$ & $L(\mu) + \frac{1}{2}\text{Tr}[\Sigma H(\mu)]$               \\
\hline
\makecell[l]{Analysis SGD dynamics\\(what the algorithm does)} & $L(\mu) + \sqrt{p} ||g(\mu)||_\Sigma$ + $\frac{\delta}{4} ||g(\mu)||_2^2$  & $L(\mu)  + \frac{\delta}{4}\text{Tr}[\Sigma H(\mu)^2] + \frac{\delta}{4}||g(\mu)||_2^2$ \\
\hline
\end{tabular}
\caption{Summary of the flatness-seeking regularization properties of SAM and MFVI. In the \textbf{first row} we approximate the idealized loss functions of SAM (Eqn.\ \eqref{eqn:SAM_ideal}) and MFVI (Eqn.\ \eqref{eqn:MFVI_ideal}) in terms of higher order derivatives. SAM approximately penalizes the norm of the gradient, while MFVI penalizes the trace of the Hessian. Near minima, $\text{Tr}[\Sigma H(\mu)]$ is a good measure of sharpness, but it can take negative values around saddle points and local maxima. In the \textbf{second row} we approximate the implicit regularization properties of stochastic gradient descent with SAM (Eqn.\ \eqref{eqn:SAM_actual}) or MFVI gradients (Eqn.\ \eqref{eqn:MFVI_actual}), with small but finite learning rate $\delta$. The $\frac{\delta}{4}\|g(\mu)\|$ term represents the implicit bias of GD with finite step size as in as in \citep{Smith2021}. SAM's flatness penalty is the same as the idealized, but for MFVI the penalty contains the trace of the Hessian squared, which is now always positive.}
\label{tab:regular}
\end{table*}

When $\Sigma$ is chosen to be diagonal, the minimization of (Eqn.\ \eqref{eqn:MFVI_ideal}) is called mean-field variational inference (MFVI). Here, we liberally extend this name to algorithms where $\Sigma$ is non-diagonal. Dropping the $\operatorname{KL}$ term from (Eqn.\ \eqref{eqn:MFVI_ideal}) yields variational optimization (VO) and evolution strategies (ES) \citep{VO,CMAES, ES, NES}.

We can estimate the gradients of $L_\text{MFVI}$ using the reparametrization trick 
\citep{reparametrizationtrick}. We also expand the KL divergence term, and substitute $\alpha := \frac{1}{2N\sigma^2_0}$ to obtain:
\begin{equation}
    \nabla^\top_{\mu} L_\text{MFVI}(\mu, \Sigma) =  \mathbb{E}_{\eta \sim \mathcal{N}(0, I)} \nabla^\top_{\theta} L(\mu + \Sigma^{\frac{1}{2}} \eta) + 2\alpha\mu.
\end{equation}
We approximate this by a single-sample Monte Carlo estimate and get the update direction:
\begin{flalign}
\begin{aligned}
    \nabla^\top_{\mu} L_\text{MFVI}(\mu, \Sigma) \approx \nabla^\top_{\theta} L(\mu + \Sigma^{\frac{1}{2}} \eta) + 2\alpha\mu, \\ 
    \text{ where }\eta\sim\mathcal{N}(0, I)
    \label{eqn:MFVI_actual}
\end{aligned}
\end{flalign}

Note the similarity between Eqns.\ \eqref{eqn:SAM_actual} and \eqref{eqn:MFVI_actual}. They are of precisely the same form except for the perturbation $\eta$: in SAM, $\eta$ is chosen to be the worst-case perturbation calculated deterministically from $\mu$ by taking a gradient, while in MFVI, $\eta$ is drawn from a standard normal. Throughout this work, we will refer to the variant of MFVI where we fix $\Sigma=\frac{\rho^2}{p}I$ as \emph{RandomSAM}, or $L_\text{RSAM}(\mu)$.

\section{Exploring the relationship}

This section aims to further strengthen the connection mentioned above. We first establish that the SAM objective upper bounds the MFVI objective, and discuss what this means in terms of the theoretical justification of both algorithms. Then, we discuss connections between flatness and the minimum description length paradigm.

\subsection{SAM as an upper bound on VI}

The similarity between the SAM and MFVI updates is not surprising given that $L_{SAM}$ can be considered as a loose upper bound on $L_{MFVI}$ when the variance $\Sigma$ is sufficiently small, and the number of parameters, $p$ is sufficiently large. This is because in high dimensions, samples from  $\mathcal{N}(\mu, \Sigma)$ concentrate around the ellipsoid $(x-\mu)^\top\Sigma^{-1}(x - \mu) = p$. Thus, any expectation over the $\mathcal{N}(\mu, \Sigma)$ can be upper bounded by the maximum value within the ellipsoid i.\,e.\ $\max_{(x-\mu)^\top\Sigma^{-1}(x - \mu) \leq p} L(\theta)$.

Stated informally, in high dimensions the following relationship holds between the MFVI and SAM objectives:

\begin{equation*}
    \mathbb{E}_{\theta \sim \mathcal{N}(\mu, \Sigma)} L(\theta) \lessapprox \max_{\epsilon^T \Sigma^{-1} \epsilon\leq p} L(\mu + \epsilon)
\end{equation*}

This upper bound relationship is exploited in the theorems used to justify SAM \citep{SAM}, ASAM \citep{ASAM} and FSAM  \citep{FSAM}. In these prior work, proofs are given that the SAM objective upper bounds test error under certain assumptions. However, it is only the last step of these proofs where the upper bound relationship is exploited. Thus the same proofs, based mostly on PAC-Bayes, provide even stronger bounds and guarantees for the MFVI objective. Therefore, if the theoretical justification provided for SAM indeed meaningfully explains its empirical success, one could expect similar or better performance from MFVI algorithms as well.

\subsection{Flatness and description length}

Flatness and minimum description length are closely related concepts. The pursuit of flat minima in deep learning was originally motivated by a minimum description length argument \citep{HochreiterSchmidhuber1997}. The MFVI objective, too, is typically motivated as minimizing the description length of training data \citep{Hinton1993KeepingTN, miracle}. Although not usually presented this way, MFVI can also be seen to minimizing the following sharpness penalty:
\begin{equation*}
    \mathbb{E}_{\theta \sim \mathcal{N}(\mu, \Sigma)} [L(\theta) - L(\mu)]
\end{equation*}
Unlike SAM's sharpness penalty, however, this penalty is not necessarily positive. Indeed can take negative values around saddle points and local maxima, locations of the parameter space we would prefer to avoid.

In the following section we will look at the implicit sharpness-avoiding bias of both SAM and MFVI, attempting to characterize them precisely in terms of higher order derivatives.

\section{Results on implicit regularization}
\label{MethodAnalysis}

Here, we characterize the implicit regularization towards flat minima in terms of higher order derivatives, borrowing techniques from \cite{Roberts2018, Smith2021, gdregul}. Some of these techniques approximate the loss function, while others give an approximation on the gradient descent path. The latter will explain better, how these algorithms work in practice. We summarise these results in Table (\ref{tab:regular}) and give further explanations in the following four propositions (we set $\alpha=0$ ignoring the $L_2$ or $\operatorname{KL}$ penalties for readability):
\begin{proposition}
    \label{InductiveBias}
    The following approximation holds for the SAM objective ($\Sigma=\frac{\rho^2}{p}I$):
    \begin{align}
        L_{\text{SAM}} (\mu) \approx L(\mu) + \rho \|\nabla_\theta L(\mu)\|_2.
        \label{eqn:SAM_inductive}
    \end{align}
        Furthermore, for general $\Sigma$, this takes the form
    \begin{flalign}
        \begin{aligned}
            L_{\text{SAM}} (\mu, \Sigma) &\approx L(\mu) + \sqrt{p}\|\nabla_\theta L(\mu)\|_\Sigma \\
            &= L(\mu) + \sqrt{p}\sqrt{\nabla^\top_\theta L(\mu) \Sigma \nabla_\theta L(\mu)} .
        \end{aligned}
    \end{flalign}
\end{proposition}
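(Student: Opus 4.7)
The plan is to approximate the inner maximum in the SAM objective from Eqn.\ \eqref{eqn:SAM_ideal} by replacing $L(\mu+\epsilon)$ with its first-order Taylor expansion around $\mu$, namely $L(\mu+\epsilon) \approx L(\mu) + \nabla_\theta L(\mu)^\top \epsilon$. Since the constant $L(\mu)$ cancels in the bracket $[L(\mu+\epsilon)-L(\mu)]$, the problem reduces to
\begin{equation*}
    \max_{\epsilon^\top \Sigma^{-1} \epsilon \leq p}\, \nabla_\theta L(\mu)^\top \epsilon,
\end{equation*}
a linear objective over a quadratically constrained set.

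The key step is to evaluate this constrained linear maximum in closed form. I would change variables via $u = \Sigma^{-1/2}\epsilon$, so that the constraint becomes $\|u\|_2^2 \leq p$ and the objective becomes $(\Sigma^{1/2}\nabla_\theta L(\mu))^\top u$. By Cauchy--Schwarz, the maximum over $\|u\|_2 \leq \sqrt{p}$ is $\sqrt{p}\,\|\Sigma^{1/2}\nabla_\theta L(\mu)\|_2$, attained at $u^* = \sqrt{p}\,\Sigma^{1/2}\nabla_\theta L(\mu)/\|\Sigma^{1/2}\nabla_\theta L(\mu)\|_2$. Translating back gives $\epsilon^*(\mu) = \Sigma^{1/2} u^*$, which is precisely Eqns.\ \eqref{epsilon}--\eqref{reparam} already stated in the paper, so this step also serves as a derivation of the formula used there. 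Plugging the maximum value back yields
\begin{equation*}
    L_{\text{SAM}}(\mu, \Sigma) \approx L(\mu) + \sqrt{p}\sqrt{\nabla_\theta L(\mu)^\top \Sigma \nabla_\theta L(\mu)},
\end{equation*}
which is the general form claimed.

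For the isotropic case, I would simply substitute $\Sigma = \tfrac{\rho^2}{p} I$ into the general expression, giving $\sqrt{p}\cdot\sqrt{\tfrac{\rho^2}{p}\|\nabla_\theta L(\mu)\|_2^2} = \rho\,\|\nabla_\theta L(\mu)\|_2$, which recovers Eqn.\ \eqref{eqn:SAM_inductive}. The $\alpha\|\mu\|_2^2$ regularization term is dropped by assumption as noted in the text.

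The main obstacle is less a computational one and more a matter of justifying the first-order Taylor truncation. Strictly, the remainder is $O(\|\epsilon\|^2)$ and the feasible set has $\|\epsilon\|_2 = O(\sqrt{p \lambda_{\max}(\Sigma)})$, so the approximation is tight when the spectral radius of $\Sigma$ is small (e.g., when $\rho$ is small in the isotropic case). I would state this assumption explicitly and note that the next-order correction would contribute a Hessian-weighted term of the form $\tfrac{1}{2}\epsilon^{*\top} H(\mu)\epsilon^*$; since this subleading term is omitted in the approximation, the claim is an approximate equality ($\approx$) rather than an exact one, consistent with the statement of the proposition.
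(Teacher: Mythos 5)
Your proposal is correct and follows essentially the same route as the paper's proof in Appendix A: a first-order Taylor expansion of $L(\mu+\epsilon)$, maximization of the resulting linear form over the ellipsoid $\epsilon^\top\Sigma^{-1}\epsilon\leq p$ yielding $\sqrt{p}\,\|\nabla_\theta L(\mu)\|_\Sigma$, and the same $O(p\,\lambda_{\max}(\Sigma))=O(\rho^2)$ control of the remainder. The only cosmetic difference is that you re-derive the maximizer $\epsilon^*$ via Cauchy--Schwarz rather than citing Eqn.\ \eqref{epsilon}, which is a nice self-contained touch but not a different argument.
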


\begin{proof}[Proof (sketch)]
Performing a first-order Taylor approximation, and plugging in $\epsilon^*$ from (Eqn.\ \eqref{epsilon}), we arrive at
\begin{flalign}
\begin{aligned}
L_{\text{SAM}}(\mu, \Sigma)
&= L(\mu) + \sqrt{p}\sqrt{\nabla^\top_\theta L(\mu) \Sigma \nabla_\theta L(\mu)}  \\
&+ O(\epsilon^{*\top}  \epsilon^*).
\end{aligned}
\end{flalign}
By using the reparametrization (Eqn.\ \eqref{reparam}), we can bound the error by $O(\rho^2)$. For details, see Appendix \ref{ImplicitRegularization}.
\end{proof}
Thus we can think of SAM as a way to penalize the gradient norm. In fact, two different results show that this interpretation holds both for the ideal $L_\text{SAM}$ from Eqn.\ \eqref{eqn:SAM_ideal}, and for the approximate SAM step from Eqn.\ \eqref{eqn:SAM_actual} (see Appendix \ref{ImplicitRegularization} for details). This penalty is similar to the squared gradient norm found to describe the implicit regularization of full batch gradient descent in \cite{gdregul}, which is shown for SAM in the following Proposition \ref{sam_bwe}.
\begin{proposition}
\label{sam_bwe}
Gradient descent with the SAM  step follows a path that is closest to the exact continuous path given by $\dot{\mu}=-\nabla_{\mu}\tilde{L}_{\text{SAM}}(\mu)$, where $\tilde{L}_{\text{SAM}}(\mu)$ is given by
\begin{flalign}
    \begin{aligned}
    \label{prop2}
        \tilde{L}_{\text{SAM}}(\mu) &\approx L(\mu) + \sqrt{p} \|\nabla_\theta L(\mu)\|_\Sigma\\
        &+\frac{\delta}{4}||\nabla_\theta L(\mu) ||_2^2,
    \end{aligned}
\end{flalign}
where $\delta$ is the stepsize of the gradient descent algorithm.
\end{proposition}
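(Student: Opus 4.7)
The plan is to reduce Proposition \ref{sam_bwe} to an application of backward error analysis \cite{Smith2021, gdregul} layered on top of Proposition \ref{InductiveBias}. The first step is to show that the actual SAM update of Eqn.\ \eqref{eqn:SAM_actual} is, to leading order, ordinary gradient descent on the ideal SAM loss. Taylor-expanding the gradient at the perturbed point,
\begin{equation*}
\nabla_\theta L(\mu + \Sigma^{1/2}\eta(\mu)) = \nabla_\theta L(\mu) + \sqrt{p}\,\frac{H(\mu)\,\Sigma\,\nabla_\theta L(\mu)}{\sqrt{\nabla_\theta L(\mu)^\top \Sigma\,\nabla_\theta L(\mu)}} + O(\rho^2),
\end{equation*}
and recognising the second term as the $\mu$-gradient of $\sqrt{p}\|\nabla_\theta L(\mu)\|_\Sigma$, Proposition \ref{InductiveBias} lets us rewrite the SAM step as $\mu_{t+1} = \mu_t - \delta\,\nabla_\mu L_{\text{SAM}}(\mu_t) + O(\rho^2)$.

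Next I would invoke the backward error analysis of \cite{Smith2021, gdregul}: gradient-descent iterates $\mu_{t+1} = \mu_t - \delta\,\nabla F(\mu_t)$ shadow the continuous flow $\dot\mu = -\nabla\tilde F(\mu)$ of the modified loss $\tilde F(\mu) = F(\mu) + \tfrac{\delta}{4}\|\nabla F(\mu)\|_2^2$, with error $O(\delta^2)$ on any finite interval. Setting $F = L_{\text{SAM}}$ yields
\begin{equation*}
\tilde L_{\text{SAM}}(\mu) \approx L(\mu) + \sqrt{p}\,\|\nabla_\theta L(\mu)\|_\Sigma + \tfrac{\delta}{4}\,\|\nabla_\mu L_{\text{SAM}}(\mu)\|_2^2.
\end{equation*}
Finally, since $\nabla_\mu L_{\text{SAM}} = \nabla_\theta L + O(\sqrt{p}\rho)$ by the Taylor expansion above, we have $\|\nabla_\mu L_{\text{SAM}}\|_2^2 = \|\nabla_\theta L\|_2^2 + O(\sqrt{p}\rho)$, and the leftover cross-term is absorbed into the $O(\rho^2)$ and $O(\delta^2)$ remainders already present. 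This leaves exactly the form claimed in Eqn.\ \eqref{prop2}.

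The main obstacle I anticipate is a clean simultaneous bookkeeping of three distinct sources of error: the $O(\rho^2)$ remainder from the Taylor expansion used to derive $\epsilon^*$ in Eqn.\ \eqref{epsilon} (inherited from Proposition \ref{InductiveBias}), the $O(\delta^2)$ remainder from backward error analysis, and the cross-term arising when $\|\nabla L_{\text{SAM}}\|_2^2$ is replaced by $\|\nabla_\theta L\|_2^2$ in the implicit-regularisation term. Making these bounds uniform enough for the shadowing statement to hold over $O(1)$ continuous time — essentially, controlling second and third derivatives of $L$ along the descent path — is the delicate part that any fully rigorous treatment would have to pin down, as was done in \cite{gdregul} for vanilla GD.
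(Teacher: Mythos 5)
Your proposal is correct and follows essentially the same route as the paper's proof: apply the backward-error-analysis formula $\tilde{L}_{\text{SAM}} = L_{\text{SAM}} + \tfrac{\delta}{4}\|\nabla L_{\text{SAM}}\|_2^2$ from \cite{gdregul}, substitute the approximation of Proposition \ref{InductiveBias}, and discard the cross terms in $\|\nabla L_{\text{SAM}}\|_2^2$ on the grounds that they scale with $\rho$ and $\rho^2$ and carry an extra factor of $\delta$. Your additional opening step --- Taylor-expanding $\nabla_\theta L(\mu+\Sigma^{1/2}\eta(\mu))$ to verify that the practical SAM update of Eqn.\ \eqref{eqn:SAM_actual} is, to leading order, gradient descent on the ideal $L_{\text{SAM}}$ --- is a justification the paper leaves implicit, but it does not change the argument.
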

\begin{proof}[Proof (sketch)]
As in \cite{gdregul} we get the following:
\begin{align}
    \tilde{L}_\text{SAM}(\mu)\approx L_\text{SAM}(\mu) + \frac{\delta}{4} ||\nabla_\theta L_\text{SAM}(\mu)||_2^2
\end{align}
After Taylor's approximation, we obtain (Eqn.\  \eqref{prop2}).
\end{proof}
This approximation confirms the \emph{flatness-seeking} behaviour of the SAM algorithm. It is similar to the previous result, with an extra regularizer term, which penalizes sharpness.

\begin{proposition}
    \label{sharpness}
    The following approximation holds for the RandomSAM algorithm with constant  $\Sigma=\frac{\rho^2}{p}I$:
    \begin{align}
        L_{\text{RSAM}}(\mu) 
        \approx L(\mu) + \frac{\rho^2}{2p} \text{Tr} H(\mu)
    \end{align}
where $H(\mu)$ denotes the Hessian. For MFVI with general $\Sigma$ we have the following approximation:
    \begin{align}
        L_{\text{MFVI}}(\mu, \Sigma) 
        \approx L(\mu) + \frac{\text{Tr}\left[ \Sigma H(\mu)\right]}{2}
    \end{align}
\end{proposition}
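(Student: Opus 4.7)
The plan is to Taylor expand $L(\theta)$ to second order around $\theta=\mu$ and then take the Gaussian expectation term by term. With $\alpha=0$, the MFVI objective reduces to $L_{\text{MFVI}}(\mu,\Sigma) = \mathbb{E}_{\epsilon\sim\mathcal{N}(0,\Sigma)} L(\mu+\epsilon)$. Substituting the expansion
\[
L(\mu+\epsilon) = L(\mu) + \nabla_\theta L(\mu)^\top \epsilon + \tfrac{1}{2}\epsilon^\top H(\mu)\epsilon + R(\epsilon),
\]
the linear term vanishes in expectation because $\mathbb{E}[\epsilon]=0$. For the quadratic term I would invoke the standard Gaussian identity $\mathbb{E}[\epsilon^\top A\epsilon] = \text{Tr}(A\Sigma)$ valid for any symmetric $A$, applied with $A=H(\mu)$, yielding the claimed $\tfrac{1}{2}\text{Tr}[\Sigma H(\mu)]$ penalty.

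The RandomSAM formula then follows by direct substitution of $\Sigma=\tfrac{\rho^2}{p}I$ into the general result, giving $\tfrac{1}{2}\text{Tr}[\tfrac{\rho^2}{p}H(\mu)] = \tfrac{\rho^2}{2p}\text{Tr}\,H(\mu)$. The two assertions of the proposition are thus really the same statement; the only difference is that RandomSAM fixes the shape and scale of $\Sigma$, collapsing $\text{Tr}[\Sigma H]$ into a scalar multiple of $\text{Tr}\,H$.

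The main obstacle, and the reason the statement uses $\approx$ rather than $=$, is controlling the remainder $R(\epsilon)$. Assuming $L$ has bounded third derivatives in a neighbourhood of $\mu$, one has $|R(\epsilon)|\le C\|\epsilon\|^3$ for a local constant $C$, and standard Gaussian moment bounds give $\mathbb{E}\|\epsilon\|^3 = O((\text{Tr}\,\Sigma)^{3/2})$. For RandomSAM this specializes to an $O(\rho^3)$ remainder, so the approximation is accurate in the small-$\rho$ regime relevant in practice. I would defer the detailed error analysis to the appendix, paralleling the remainder bound already used in the sketch for Proposition \ref{InductiveBias}, and note the qualitative point emphasized in Table \ref{tab:regular}: unlike SAM's gradient-norm penalty, $\text{Tr}[\Sigma H(\mu)]$ can be negative near saddle points, which is precisely the shortcoming the SGD-dynamics analysis in the next proposition repairs.
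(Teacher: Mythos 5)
Your proposal is correct and follows essentially the same route as the paper's proof: a second-order Taylor expansion of $L$ around $\mu$, the vanishing of the linear term in expectation, and the Gaussian quadratic-form identity yielding $\tfrac{1}{2}\mathrm{Tr}[\Sigma H(\mu)]$ (the paper merely writes this via the reparametrisation $\epsilon = \Sigma^{1/2}\eta$ with $\eta\sim\mathcal{N}(0,I)$ and the cyclic property of the trace, which is the same computation). Your explicit $O((\mathrm{Tr}\,\Sigma)^{3/2})$ remainder bound is a small addition the paper omits, but it does not change the argument.
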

\begin{proof}[Proof (sketch)]
The MFVI objective is
\begin{align}
    L_{\text{MFVI}}(\mu)= \mathbb{E}_{\epsilon \sim \mathcal{N}(0, I)} L(\mu+\Sigma^{\frac{1}{2}}\epsilon),
    \label{eqn:vo_eq}
\end{align}
Again, a first-order Taylor approximation gives
\begin{align}
L_{\text{MFVI}}(\mu) \approx L(\mu) + \frac{1}{2}\text{Tr} \big[\Sigma H(\mu) \big].
\end{align}
\end{proof}
A way of defining sharpness is via the local curvature of the loss function around the minimum given that it is a critical point \citep{Dinh17}. By Proposition \ref{sharpness}, MFVI implicitly regularizes the trace and hence around an optimum its eigenvalues. Since local curvature is encoded in the Hessian eigenvalues,  this means that MFVI penalizes a notion of sharpness at critical points of the loss landscape.

\begin{figure*}[t]
    \centering
    \includegraphics[trim=5 0 0 0, clip, width=1.0\textwidth]{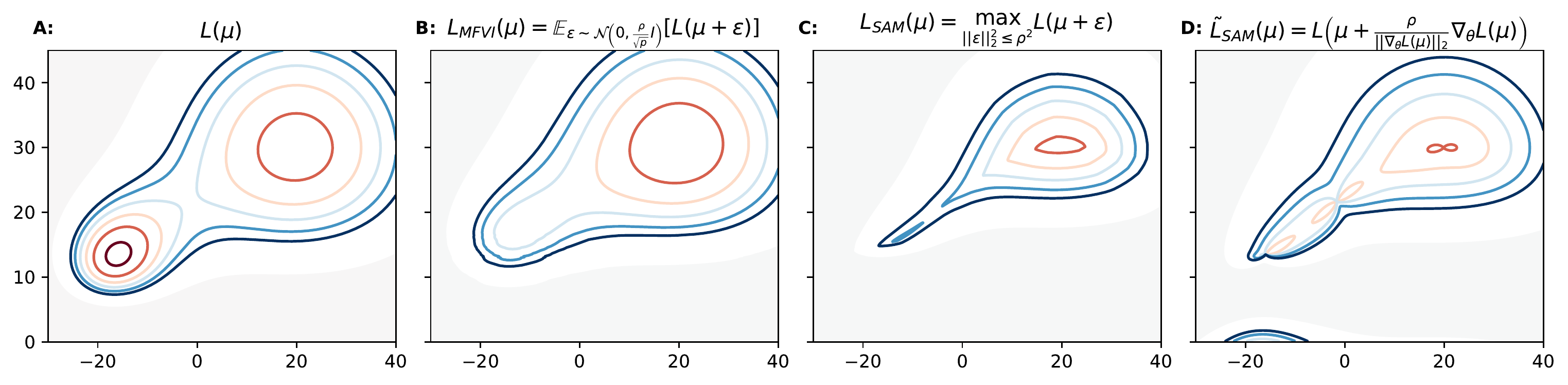}
    \caption{Illustration of how MFVI and SAM encourage convergence to flatter minima on a 2D toy example. \textbf{A}: The original loss $L$, taken from \cite{FSAM}, has two minima, a sharp and a wide one. \textbf{B: }in MFVI, averaging over the Gaussian variational posterior smoothes out the sharper minimum, increasing the attraction basin of the flat one. \textbf{C: }considering the worst-case within a Euclidean ball achieves a similar transformation \textbf{D: }the SAM update relies on a Taylor approximation, which does not apply in highly non-linear regions (in all cases $\rho=8$
    \label{fig:2d_toy}).}
\end{figure*}

\begin{proposition}
    \label{mfvi_bwe}
    Gradient descent with MFVI step follows a path that is closest to the gradient flow path on $\tilde{L}_\text{MFVI}(\mu, \Sigma)$, where $\tilde{L}_\text{MFVI}(\mu, \Sigma)$ is the following
    \begin{align}
        &\tilde{L}_\text{MFVI}(\mu,\Sigma) \approx  L(\mu) + \frac{\delta}{4} ||g(\mu)||_2^2 +\frac{\delta}{4} \text{Tr}[\Sigma H(\mu)^2],
    \end{align}
    where $g(\mu + \Sigma^\frac{1}{2}\eta)=\nabla_\theta L(\mu + \Sigma^\frac{1}{2} \eta).$

\end{proposition}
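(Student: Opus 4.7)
The plan is to mirror the proof of Proposition \ref{sam_bwe}, applying modified-equation backward error analysis to the MFVI stochastic gradient update. The MFVI iteration (Eqn.\ \eqref{eqn:MFVI_actual}) is stochastic gradient descent on $L_\text{MFVI}(\mu, \Sigma)$ whose per-step noise arises from the single-sample reparametrisation $\eta \sim \mathcal{N}(0, I)$. Applying the Smith et al.\ / Barrett \& Dherin backward error analysis in the stochastic setting yields
\begin{equation*}
\tilde{L}_\text{MFVI}(\mu, \Sigma) \approx L_\text{MFVI}(\mu, \Sigma) + \frac{\delta}{4}\,\mathbb{E}_{\eta \sim \mathcal{N}(0,I)}\bigl[\|\nabla_\theta L(\mu + \Sigma^{1/2}\eta)\|_2^2\bigr],
\end{equation*}
where the first term is the population MFVI loss and the second captures the finite-step-size plus gradient-noise correction.

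Next, I would compute the expected squared gradient by a first-order Taylor expansion of $\nabla_\theta L$ around $\mu$:
\begin{equation*}
\nabla_\theta L(\mu + \Sigma^{1/2}\eta) \approx g(\mu) + H(\mu)\,\Sigma^{1/2}\eta.
\end{equation*}
Squaring and taking expectation, the cross term vanishes since $\mathbb{E}[\eta]=0$, while the quadratic term contributes $\mathbb{E}[\eta^\top \Sigma^{1/2} H(\mu)^2 \Sigma^{1/2}\eta] = \text{Tr}[\Sigma H(\mu)^2]$. Thus
\begin{equation*}
\mathbb{E}_\eta\bigl[\|\nabla_\theta L(\mu + \Sigma^{1/2}\eta)\|_2^2\bigr] \approx \|g(\mu)\|_2^2 + \text{Tr}\bigl[\Sigma H(\mu)^2\bigr].
\end{equation*}
Finally, invoking Proposition \ref{sharpness} to approximate $L_\text{MFVI}(\mu,\Sigma) \approx L(\mu) + \frac{1}{2}\text{Tr}[\Sigma H(\mu)]$ and substituting into the backward-error expression yields the claimed form (with the $\frac{1}{2}\text{Tr}[\Sigma H(\mu)]$ contribution from the MFVI loss either kept explicit or absorbed as a lower-order term under the regime where $\Sigma$ is small, matching the stated approximation).

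The main obstacle is justifying that backward error analysis, originally developed for SGD noise from mini-batch subsampling, transfers cleanly to reparametrisation-style Gaussian noise. The derivation in Smith et al.\ only uses the first two moments of the stochastic gradient---the first moment here being $\nabla_\mu L_\text{MFVI}(\mu,\Sigma)$ by unbiasedness of the reparametrisation trick, and the second being exactly the term computed above---so the structural argument should go through. The remaining care is bookkeeping: verifying that higher-order terms in the Taylor expansion produce only $O(\Sigma^{3/2})$ corrections after taking expectation, and that the $O(\delta^2)$ remainder in the modified equation is genuinely dominated on the timescales of interest, so that the stated approximation is faithful.
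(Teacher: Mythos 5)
Your proposal follows essentially the same route as the paper's proof: invoke the backward error analysis of \cite{gdregul} to obtain $\tilde{L}_\text{MFVI} \approx \mathbb{E}_\eta\bigl[L(\mu+\Sigma^{1/2}\eta) + \tfrac{\delta}{4}\|\nabla_\theta L(\mu+\Sigma^{1/2}\eta)\|_2^2\bigr]$, Taylor-expand the gradient to first order, and take the expectation so that the cross term vanishes and the quadratic term yields $\mathrm{Tr}[\Sigma H(\mu)^2]$. If anything you are slightly more careful than the paper, which silently drops the $\tfrac{1}{2}\mathrm{Tr}[\Sigma H(\mu)]$ contribution from the expected loss via a first-order expansion of $L$, whereas you flag explicitly that this term must be treated as lower order for the stated form to hold.
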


\begin{proof}[Proof (sketch)]

Since the MFVI objective is not typically available in closed form, in practice one uses a single-sample Monte Carlo estimate based on the reparametrization trick. This yields a stochastic objective as follows.
\begin{align}
    L_{\text{MFVI}}(\mu) \approx L(\mu + \Sigma^{\frac{1}{2}}\eta), \quad  \eta \sim \mathcal{N}(0, I)
\end{align}
Now by \cite{gdregul} and taking expectation we get
\begin{flalign}
    \begin{aligned}
        &\tilde{L}_\text{MFVI}(\mu,\Sigma)\approx \\
        \mathbb{E}_{\eta\sim\mathcal{N}(0,I)}&\left[ L(\mu + \Sigma^{\frac{1}{2}}\eta) + \frac{\delta}{4}||\nabla_\theta L(\mu+\Sigma^{\frac{1}{2}}\eta)||_2^2 \right]
    \end{aligned}
\end{flalign}
We can finish up the proof with a Taylor approximation.
\end{proof}

This shows that Mean Field Variational Inference implicitly regularizes the trace of the Hessians square, and hence its eigenvalues, which are a meaningful metric of sharpness. However, note that the strength of this penalty appears to be much smaller, in the order of $\frac{\rho^2}{p}$ rather than $\rho$. The full proofs can be found in Appendix \ref{VOSharpness}.

In Figure \ref{fig:2d_toy} we illustrate the sharpness-avoidance of $L_\text{SAM}$ and $L_\text{RSAM}$ on a 2D toy example originally introduced in \cite{FSAM}. We can see that both transformations are effective at reducing the attraction basin of the sharp minimum. What is less clear, however, is which one of these translates better to the high-dimensional problems we encounter in deep learning, and indeed which one leads to better generalization.

The above connections between MFVI and SAM motivate our two main research questions which we investigate in the rest of this report:

\begin{enumerate}[nosep]
    \item All things being equal, is SAM or \texttt{RandomSAM} (MFVI with fixed $\Sigma=\frac{\rho^2}{p}I$) more effective at finding minima that generalises well?
    \item It is possible to learn $\Sigma$ in SAM via gradient descent along with $\mu$ as in MFVI. This results in an algorithm we call \texttt{VariationalSAM} (VSAM). Does this have advantages over SAM where $\Sigma$ is fixed?
\end{enumerate} 


\subsection{Algorithms}
This section provides further descriptions of the variational algorithms mentioned in the previous sections: Mean-Field Variational Inference, RandomSAM, and VariationalSAM.

\begin{algorithm}[h]
\label{MeanFieldAlg}
 \SetAlgoLined
 \caption{Mean Field Variational Inference Algorithm}
 \SetKwInOut{Input}{Input}
 \SetKwInOut{Initialize}{Initialize}
 
 \Input{Training set $S=\{(x_i,y_i)\}$, parameter $\sigma_0$}
 \Initialize{$\Sigma$ and $\mu$}
 \For{$t=1,2, \cdots $}{ ~\\
 (1) Sample batch $B\sim S$ ~\\
 (2) Take a sample of $\eta \sim N(0,I)$ ~\\
 (3) Compute the gradient of the loss of $\mu$ on batch B, i.e. $\left[\nabla_{\mu}L_B\right]_{\mu + \Sigma^{\frac{1}{2}}\eta}$ ~\\
 (4) Compute the loss of $\Sigma$ on batch B, i.e. $\left[\nabla_\Sigma L_B\right]_{\mu + \Sigma ^{\frac{1}{2}} \eta } + \nabla_\Sigma \text{KL} [(\mu,\Sigma)||(0,\sigma_0 I)]$ ~\\
 (5) Update $\mu$ and $\Sigma$, i.e. ~\\
 $\mu \longleftarrow \mu - \eta_1 \left[\nabla_{\mu}L_B\right]_{\mu + \Sigma^{\frac{1}{2}}\eta}$ ~\\
 $\Sigma \longleftarrow \Sigma - \eta_2 \left(\nabla_{\Sigma} \left[ L_B\right]_{\mu + \Sigma^{\frac{1}{2}} \eta } +\nabla_\Sigma \text{KL}[(\mu,\Sigma)||(0, \sigma_0 I)]\right)$ ~\\
 }
\end{algorithm}

\texttt{RandomSAM}, introduced in Section \ref{sec:MFVI}, replaces the gradient-based first step of SAM by a random search direction. It implements MFVI with a fixed covariance $\Sigma=\frac{\rho^2}{p}I$.

In \texttt{VariationalSAM} (VSAM), we learn both $\Sigma$ and $\mu$ via gradient descent, and use the following loss:
\begin{flalign}
    \begin{aligned}
        L_{\text{VSAM}}(\mu, \Sigma) &= \max_{\epsilon^T \Sigma^{-1} \epsilon\leq p} L(\mu + \epsilon) \\
        &+ \text{KL}[\mathcal{N}(\mu, \Sigma)||\mathcal{N}(0,\sigma_0^2 I)]
    \end{aligned}
\end{flalign}
Note that this is the same as $L_\text{SAM}(\mu,\Sigma)$, but we reinterpret the $L_2$ penalty on $\mu$ as being part of a KL divergence as in MFVI. From the perspective of updating $\mu$, the $L_2$ and $\operatorname{KL}$ penalties are equivalent, but using the full $\operatorname{KL}$ allows us to also control the size of $\Sigma$. Details can be found in Appendix \ref{VSAMAlg} and \ref{VSAMMot}. In addition, we have extended the PAC-Bayes bound from \cite{SAM, ASAM, FSAM} to Variational SAM, details of the proof can be found in Appendix \ref{Pac}. Table \ref{tab:methods} summarises the relationships between methods mentioned in this report.

\begin{table*}[h]
\centering
\begin{tabular}{l|rr|rr} 
\cline{2-5}
\multicolumn{1}{l}{} & \multicolumn{2}{c|}{CIFAR-10}                                               & \multicolumn{2}{c}{CIFAR-100}                                               \\ 
\cline{2-5}
\multicolumn{1}{l}{} & \multicolumn{1}{l}{WideResNet 28-2} & \multicolumn{1}{l|}{WideResNet 28-10} & \multicolumn{1}{l}{WideResNet 28-2} & \multicolumn{1}{l}{WideResNet 28-10}  \\ 
\hline
SGD                  & $95.90^{\pm0.07}$                   & $96.97^{\pm0.12}$                     & $74.32^{\pm0.12}$                   & $80.23^{\pm0.0040}$                                     \\
SAM                  & $96.10^{\pm0.11}$                  & $97.20^{\pm0.07}$                     & $76.25^{\pm0.27}$                   & $83.26^{\pm0.0004}$                                     \\
VSAM                 & $94.18^{\pm0.11}$                   & $96.95^{\pm0.42}$                     & $74.74^{\pm0.29}$                   & $81.72^{\pm0.0200}$                                     \\
RSAM                 & $96.08^{\pm0.13}$                   & $97.11^{\pm0.05}$                     & $75.78^{\pm0.05}$                   & $80.58^{\pm0.0006}$                                     \\
\hline
\end{tabular}
\caption{Image Classification results on CIFAR-$\{10, 100\}$ using WideResNet models (VSAM=VariationalSAM, RSAM=RandomSAM). SAM performs best in all tasks, but RandomSAM gives encouraging results on CIFAR-10.}
\label{table:results}
\end{table*}

\begin{algorithm}[h]
\label{RSAMAlg}
 \SetAlgoLined
 \caption{Random SAM Algorithm - Variational Optimization}
 \SetKwInOut{Input}{Input}
 \SetKwInOut{Initialize}{Initialize}
 
 \Input{Training set $S=\{(x_i,y_i)\}$, parameter $\sigma_0$}
 \Initialize{$\mu$}
 \For{$t=1,2, \cdots $}{ ~\\
 (1) Sample batch $B\sim S$ ~\\
 (2) Take a sample of $\eta \sim N(0,I)$ ~\\
 (3) Compute the gradient of the loss of $\mu$ on batch B, i.e. $\left[\nabla_{\mu}L_B\right]_{\mu +  \sigma_0\eta}$ ~\\
 (4) Update $\mu$, i.e. ~\\
 $\mu \longleftarrow \mu - \eta_1 \left[\nabla_{\mu}L_B\right]_{\mu + \sigma_o\eta}$ ~\\
 }
\end{algorithm}

\paragraph{PAC-Bayes Bound}

We can use the same method as in \cite{FSAM} to derive a bound on the generalization error of VariationalSAM. The bound can be found below. The proof and more details can be found in Appendix \ref{Pac}.
\begin{theorem}
For a parameter space $\mathcal{M}\time\Theta$ (with later described properties) and for any $(\mu,\Sigma)\in \mathcal{M}\times\mathcal{N}$ we have with probability at least $1-\delta$, that
\begin{flalign}
    \begin{aligned}
        \mathds{E}_{\epsilon\sim \mathcal{N}(0,\Sigma)}[L_D(\mu+\epsilon)]
        &\leq \text{max}_{\epsilon^T\Sigma^{-1}\epsilon\leq \gamma^2} L_S(\mu+\epsilon)\\
        &+ \sqrt{\frac{O(p+\log\frac{n}{\delta})}{n-1}}
    \end{aligned}
\end{flalign}
where $L_D$ is the generalization loss, and $ \text{max}_{\epsilon^T\Sigma^{-1}\epsilon\leq \gamma^2} L_S(\mu+\epsilon)$ is the empirical SAM loss, with the geometry provided by $\Sigma$, i.e. the VSAM loss without the KL divergence term and $\gamma=\sqrt{p}(1+\sqrt{\log n/p})$.

\end{theorem}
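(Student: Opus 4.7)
The plan is to follow the PAC-Bayes strategy used in the proofs for SAM, ASAM and FSAM, adapted to the anisotropic Gaussian posterior $\mathcal{N}(\mu,\Sigma)$ prescribed by VariationalSAM. The starting point is McAllester's PAC-Bayes inequality: for any data-independent prior $P$ over $\Theta$, any posterior $Q$, and all $n$-sample draws $S$, with probability at least $1-\delta$
\begin{equation*}
\mathbb{E}_{\theta\sim Q}[L_D(\theta)] \le \mathbb{E}_{\theta\sim Q}[L_S(\theta)] + \sqrt{\frac{\mathrm{KL}(Q\|P) + \log(n/\delta)}{2(n-1)}}.
\end{equation*}
I would choose $Q = \mathcal{N}(\mu,\Sigma)$, so the left-hand side is exactly the quantity we want to control. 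The prior $P$ should be taken data-independent but parameterised by discretised values of $\mu$ and the eigen-scales of $\Sigma$, so that, after a union bound over a countable grid of priors (exactly as in the SAM/ASAM/FSAM proofs), the $\mathrm{KL}(Q\|P)$ term evaluates to something of order $p + \log(n/\delta)$. This produces the complexity term $\sqrt{O(p+\log(n/\delta))/(n-1)}$ in the statement.

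Next I would convert the expected empirical loss $\mathbb{E}_{\epsilon\sim\mathcal{N}(0,\Sigma)}[L_S(\mu+\epsilon)]$ into the worst-case empirical loss over the ellipsoid $\epsilon^\top\Sigma^{-1}\epsilon\le\gamma^2$. The key tool is Gaussian concentration: if $\epsilon\sim\mathcal{N}(0,\Sigma)$ then $\epsilon^\top\Sigma^{-1}\epsilon$ is $\chi^2_p$, which concentrates tightly around $p$. Standard tail bounds give
\begin{equation*}
\Pr\bigl[\epsilon^\top\Sigma^{-1}\epsilon > \gamma^2\bigr] \le \tfrac{1}{n}
\quad\text{for}\quad \gamma = \sqrt{p}\bigl(1+\sqrt{\log n / p}\bigr),
\end{equation*}
which is exactly the radius appearing in the statement. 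Splitting the expectation into the event $\{\epsilon^\top\Sigma^{-1}\epsilon\le\gamma^2\}$ and its complement, the first part is bounded by $\max_{\epsilon^\top\Sigma^{-1}\epsilon\le\gamma^2} L_S(\mu+\epsilon)$, and the second contributes an $O(1/n)$ residual absorbed into the complexity term (assuming $L_S$ is bounded, as in the prior SAM PAC-Bayes proofs).

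Combining the two steps yields the stated inequality. The main obstacle is the construction of the data-independent prior $P$ that makes $\mathrm{KL}(\mathcal{N}(\mu,\Sigma)\|P)$ of the right order while remaining valid under the union bound, because VariationalSAM learns $\Sigma$ from the data: the eigenvalues of $\Sigma$ must be quantised on a logarithmic grid (as in the ASAM/FSAM proofs) and the overall prior mass must be distributed carefully across the grid so that the $\log(1/\delta)$ penalty does not blow up. Once that discretisation is in place, the remaining calculations are routine Gaussian tail estimates and $\chi^2$ concentration, and the full argument reduces to a direct adaptation of the FSAM derivation with $F(\mu)$ replaced by the learned $\Sigma$. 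Full details would go into Appendix~\ref{Pac}.
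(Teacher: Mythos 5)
Your outline matches the paper's own proof essentially step for step: McAllester's PAC-Bayes bound, a union bound over a finite cover of discretised priors $\mathcal{N}(\mu_i,\Sigma_j)$ giving a $\mathrm{KL}$ term of order $p$, and the Laurent--Massart $\chi^2_p$ tail bound to replace the Gaussian expectation by the maximum over the ellipsoid of radius $\gamma=\sqrt{p}(1+\sqrt{\log n/p})$, with the residual absorbed into the complexity term. The discretisation issue you flag for the learned $\Sigma$ is handled in the paper exactly as you suggest, by assuming $1/\lambda\leq\Sigma_{k,k}\leq\lambda$ and gridding the diagonal entries multiplicatively so that $\log(s)=O(p)$.
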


\begin{algorithm}[h]
\label{VSAMAlg}
 \SetAlgoLined
 \caption{VariationalSAM Algorithm}
 \SetKwInOut{Input}{Input}
 \SetKwInOut{Initialize}{Initialize}
 
  \Input{Training set $S=\{ (x_i, y_i)\}$, parameters $\alpha$ and $\beta$, ~\\ learning rates $\eta_1$ and $\eta_2$.}
  \Initialize{$\Sigma$ and $\mu$}
  \For{$t=1,2,\cdots$}{ ~\\
    (1) Sample batch $B\sim S$ ~\\
    (2) Compute the gradient of the loss on batch B, i.e. $\nabla_{\mu} L_B (\mu)$ ~\\
    (3) Compute $\epsilon^*_{\text{VSAM}}(\mu)$ using (\ref{epsilon}) ~\\
    (4) Compute gradient approximation for the VSAM loss,~\\
    i.e. $\nabla_\mu L_{\text{VSAM}}(\mu,\Sigma)\approx\frac{\partial L}{\partial\mu}|_{\mu + \epsilon^*_{\text{VSAM}}(\mu)}$ ~\\
    (5) Update $\mu \longleftarrow \mu - \eta_1 \frac{\partial L}{\partial \mu}|_{\mu + \epsilon^*_\text{VSAM}}$ ~\\
    (6) Compute the gradient of VariationalSAM loss on batch B, i.e. $\nabla_\Sigma L_{\text{VSAM}}(\mu,\Sigma)$  ~\\
    (7) Update $\Sigma\longleftarrow \Sigma-\eta_2 \nabla_\Sigma L_{\text{VSAM}}(\mu,\Sigma)$
  }
\end{algorithm}


\section{Image Classification Experiments}

In this section, we empirically assess the generalisation performance of the mentioned algorithms: vanilla SGD, SAM, RandomSAM (RSAM) and VariationalSAM (VSAM). Our implementations of these methods are open source, and can be accessed in this \href{https://anonymous.4open.science/r/Rethinking-SAM-ANON01/README.md}{anonymized repository}.

We use WideResNets \citep{WRN} on the CIFAR-10/100 datasets \citep{cifar}. 
Following prior work of \cite{SAM, ASAM, FSAM}, we calibrate the SGD optimiser with momentum $0.9$, weight decay $0.0005$ and initial learning rate $0.1$. We use stepwise decreasing learning rate scheduling as we have found this more effective than cosine learning rate scheduling. Using batch size $128$, we train the optimizers requiring two backpropagations per step (SAM, VSAM) for up to $200$ epochs, while those with only one backpropagation (SGD and RandomSAM) are trained for up to $400$ epochs.
For CIFAR-10, we employ label smoothing \citep{label_smoothing} with factor $0.1$.

We follow \cite{SAM} in setting $\rho=0.05$ in the SAM optimizer. Random SAM and VariationalSAM require the standard deviation $\sigma$ and $\Sigma$ to be specified. In order to coincide with SAM, we have set $\sigma=\frac{\rho}{\sqrt{p}}$ $\Sigma=\frac{\rho}{\sqrt{p}} I $, where $p$ is the number of parameters of the model. In VariationalSAM, we used a learning rate of $0.01$ on $\Sigma$. We set values for $\rho$ and the penalty coefficients in order to match the KL-divergence of the prior and posterior distributions. For details see Appendix \ref{VSAMMot}.
The results are summarized in Table \ref{table:results}.

SAM visibly outperforms all other methods in all experiments. However, on CIFAR-10, RandomSAM performs almost as well as SAM. Given that RandomSAM is a much simpler method operating with random noise and only one backpropagation, this result somewhat weakens the advantages of SAM. It raises the question whether the superior results of SAM on larger datasets could be reproduced by other methods using random perturbations in the parameter space. On CIFAR-100 we see a significant gap in performance, however this may be due to our limited computing budget not allowing for more extensive grid search of parameters.

Our adaptive version of SAM, Variational SAM, does not appear to work better than SAM. This may be because during training, $\Sigma$ starts to increase, and often reaches a magnitude where the Taylor approximation underlying SAM no longer holds. This raises the question whether random MFVI would have an advantage over SAM when more flexible covariance structures are used.

\begin{table}[h]
\centering
\begin{tabular}{l|ll|}
     & CIFAR-10           & CIFAR-100           \\ 
\hline
ASAM & $96.17^{\pm 0.07}$ & $65.31^{\pm 0.11}$  \\
FSAM & $96.23^{\pm 0.06}$ & $76.54^{\pm 0.21}$  \\
\hline
\end{tabular}
\caption{Results for Adaptive SAM and Fisher SAM.\\
The network is WideResNet 28-2.}
\label{tab:ASAMFSAM}
\end{table}

Although the evaluation of Adaptive SAM and Fisher SAM are not in the focus of this paper, in Table \ref{tab:ASAMFSAM} we have included a limited set of results for comparison. For these experiments, we have set additional hyperparameters to the value reported in \cite{ASAM} and \cite{FSAM}. Namely, in ASAM we use $\gamma=0.5, \eta=0.01$ for CIFAR-10, $\gamma=1.0, \eta=0.1$ for CIFAR-100 and in FSAM $\gamma=0.1, \eta=1.0$ for both datasets.

\section{Summary and Future Work}
In this work, we have provided a novel interpretation of SAM, from the angle of Variational Inference. This led to the comparison of SAM with methods taken from Variational Inference: RandomSAM (Variational Optimization) and Variational SAM. The latter method performed rather unpromisingly. Interestingly, RandomSAM performed similarly to SAM on the CIFAR-10 dataset. This raises questions about optimality of SAM in biasing the optimisation against sharp minima, which we would like to explore in the future. 

One idea is to investigate whether some randomness could improve the generalization performance of an algorithm like SAM. This could be done by constructing an algorithm that interpolates between Random SAM and SAM. The amount used from each algorithm in the training could be a further hyperparameter.

\noindent Having established the link between SAM and Variational Bayes opens the possibility to use the SAM step in variational methods beyond those explored in this paper. For example, it would be interesting to replace the random noise by the deterministic SAM noise in the Variational Autoencoder \citep{reparametrizationtrick}. 

\noindent In most experiments, a limitation of computational power has prevented extensive grid-searches on the hyperparameters in our models. Therefore, the reported test accuracies can likely be improved with further parameter calibration.


\bibliography{sample}

\onecolumn
\setcounter{proposition}{0}
\setcounter{theorem}{0}
\newpage
\section{Appendix}
\addtocontents{toc}{\protect\setcounter{tocdepth}{0}}
\renewcommand{\thesubsection}{\Alph{subsection}}
\subsection{Implicit regularization of SAM}
\label{ImplicitRegularization}
\begin{proposition}
\label{firstprop}
    The following approximation holds for the SAM objective ($\Sigma=\frac{\rho^2}{p}I$)
    \begin{align}
        L_{\text{SAM}} (\mu) \approx L(\mu) + \rho \|\nabla_\theta L(\mu)\|_2.
    \end{align}
        Furthermore, for general $\Sigma$, this takes the form
    \begin{align}
        L_{\text{SAM}} (\mu, \Sigma) \approx L(\mu) + \sqrt{p}\|\nabla_\theta L(\mu)\|_\Sigma = L(\mu) + \sqrt{p}\sqrt{\nabla^\top_\theta L(\mu) \Sigma \nabla_\theta L(\mu)} .
    \end{align}
\end{proposition}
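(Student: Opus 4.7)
My plan is to start from the idealized SAM objective in (\ref{eqn:SAM_ideal}), drop the $\alpha\|\mu\|_2^2$ term as stipulated, and isolate the inner maximization
\begin{equation*}
M(\mu,\Sigma) \;=\; \max_{\epsilon^{\top}\Sigma^{-1}\epsilon \,\le\, p}\bigl[L(\mu+\epsilon)-L(\mu)\bigr].
\end{equation*}
The right-hand side of the proposition will fall out once I replace $L(\mu+\epsilon)-L(\mu)$ by its first-order Taylor approximation and solve the resulting linear program over the ellipsoid.

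First I would write $L(\mu+\epsilon)-L(\mu) = \nabla_\theta L(\mu)^{\top}\epsilon + R(\epsilon)$, where the remainder satisfies $R(\epsilon) = O(\|\epsilon\|^{2})$ by Taylor's theorem (assuming $L$ is $C^{2}$). Ignoring $R$, the problem reduces to maximizing the linear functional $g^{\top}\epsilon$ with $g := \nabla_\theta L(\mu)$ over the ellipsoid $\{\epsilon : \epsilon^{\top}\Sigma^{-1}\epsilon \le p\}$. Using the substitution $\epsilon = \Sigma^{1/2}\eta$ converts the constraint into the Euclidean ball $\|\eta\|_2 \le \sqrt{p}$ and the objective into $(\Sigma^{1/2}g)^{\top}\eta$, so Cauchy--Schwarz gives
\begin{equation*}
\max_{\|\eta\|_2\le\sqrt{p}} (\Sigma^{1/2}g)^{\top}\eta \;=\; \sqrt{p}\,\|\Sigma^{1/2}g\|_2 \;=\; \sqrt{p}\,\sqrt{g^{\top}\Sigma g},
\end{equation*}
attained at $\eta^{*} = \sqrt{p}\,\Sigma^{1/2}g/\|\Sigma^{1/2}g\|_2$, which is precisely the reparametrization in (\ref{reparam}). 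This recovers the claimed leading-order expression; the $\Sigma=\frac{\rho^{2}}{p}I$ special case then simplifies $\sqrt{p}\sqrt{g^{\top}\Sigma g}$ to $\rho\|g\|_2$.

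Next I would quantify the approximation error introduced by dropping $R(\epsilon)$. At the maximizer $\epsilon^{*} = \Sigma^{1/2}\eta^{*}$, the norm satisfies $\|\epsilon^{*}\|_2^{2} = \eta^{*\top}\Sigma\eta^{*} \le p\,\lambda_{\max}(\Sigma)$, so $R(\epsilon^{*}) = O(p\,\lambda_{\max}(\Sigma))$. For the isotropic case $\Sigma=\frac{\rho^{2}}{p}I$ this collapses cleanly to $O(\rho^{2})$, matching the claim in the sketch. To be fully careful, I would also argue that the \emph{exact} maximizer of the Taylor-truncated problem and the exact maximizer of $L(\mu+\epsilon)-L(\mu)$ differ only at $O(\|\epsilon^{*}\|)$, so plugging $\epsilon^{*}$ back into the unexpanded objective incurs only a higher-order discrepancy -- this is the standard argument used to justify the gradient-based approximation (\ref{epsilon}) in \cite{SAM}.

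The main obstacle, and the only step that requires genuine care rather than routine calculus, is controlling this error uniformly: the first-order optimum of the quadratic-truncated problem is not literally the argmax of the original constrained problem, so one must verify that the objective gap is $O(\rho^{2})$ rather than accidentally $O(\rho)$. I would handle this by a Lagrangian stationarity argument, showing that any candidate maximizer $\tilde\epsilon$ with $\|\tilde\epsilon\|_2 = O(\rho)$ yields $L(\mu+\tilde\epsilon)-L(\mu) = g^{\top}\tilde\epsilon + O(\rho^{2})$ and taking the supremum preserves this error order. The details of this bookkeeping, including the passage from general $\Sigma$ to the isotropic form, are what I would defer to Appendix \ref{ImplicitRegularization}.
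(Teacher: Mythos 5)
Your proposal is correct and follows essentially the same route as the paper's proof in Appendix \ref{ImplicitRegularization}: a first-order Taylor expansion, the substitution $\epsilon=\Sigma^{1/2}\eta$ to reduce the ellipsoid to a Euclidean ball, the closed form $\sqrt{p}\sqrt{g^{\top}\Sigma g}$ at the maximizer, and the error bound $\epsilon^{*\top}\epsilon^{*}=\eta^{\top}\Sigma\eta\le p\,\lambda_{\max}(\Sigma)=O(\rho^{2})$. The only difference is cosmetic: the paper plugs in the known $\epsilon^{*}$ from (\ref{epsilon}) and expands around it, whereas you re-derive $\epsilon^{*}$ via Cauchy--Schwarz and add the (welcome, but not present in the paper) observation that the Taylor remainder is uniformly $O(\rho^{2})$ over the whole feasible set, so the gap between the true and linearized maxima is also controlled.
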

\begin{proof}
Plugging the solution $\epsilon^*$ in (Eqn.\ \eqref{epsilon}) into the SAM objective and performing a first-order Taylor approximation, we arrive at
\begin{flalign*}
    L_{\text{SAM}} (\mu, \Sigma) 
    &= \text{max}_{\epsilon^T \Sigma^{-1} \epsilon < p} L(\mu + \epsilon)\\
    &= L(\mu + \epsilon^*)\\
    &= L(\mu) + \epsilon^{*\top} \nabla_\theta L(\mu) + O(\epsilon^{*\top}  \epsilon^*)\\
    &= L(\mu) + \sqrt{p} \frac{ \nabla^\top_\theta L(\mu) \Sigma}{\sqrt{\nabla^\top_\theta L(\mu) \Sigma \nabla_\theta L(\mu)}} \nabla_\theta L(\mu) + O(\epsilon^{*\top}  \epsilon^*) \\
    &= L(\mu) + \sqrt{p}\sqrt{\nabla^\top_\theta L(\mu) \Sigma \nabla_\theta L(\mu)}  + O(\epsilon^{*\top}  \epsilon^*),
\end{flalign*}
where we have used that $\Sigma$ is a symmetric matrix. 
In order to show that the error term is controlled, let us use express $\epsilon^*$ as in (Eqn.\ \eqref{reparam}) as $\epsilon^*=\Sigma^{\frac{1}{2}} \eta$. Then
\begin{align}
    \epsilon^{* \top}  \epsilon^* = \eta^{\top} \Sigma \eta.
\end{align}
From (Eqn.\ \eqref{reparam}), we see that $\| \eta \|_2 = \sqrt{p}$, which means that as long as the largest (in magnitude) eigenvalue of $\Sigma$, $\lambda_{\text{max}} < K \frac{1}{p}$ for some $K$ constant, we have $\eta^{\top} \Sigma \eta < K$.
In the special case of SAM, we have $\Sigma=\frac{\rho^2}{p}I$, hence the error term scales with $\rho^2$.
\end{proof}

\begin{proposition}
    Full batch gradient descent using the SAM step follows a path that is closest to the exact continuous path given by $\dot{\mu}=-\nabla_{\mu}\tilde{L}_{\text{SAM}}(\mu)$, where $\tilde{L}_{\text{SAM}}(\mu)$ is given by
    \begin{align}
        \tilde{L}_{\text{SAM}}(\mu)\approx L(\mu) + \sqrt{p} \|\nabla_\theta L(\mu)\|_\Sigma
        +\frac{\delta}{4}||\nabla_\theta L(\mu) ||_2^2,
    \end{align}
    where $\delta$ is the stepsize of the gradient descent algorithm.
\end{proposition}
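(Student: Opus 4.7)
The plan is to invoke the backward-error-analysis result of \cite{gdregul} on the SAM objective and then reduce every term to quantities involving only the original loss $L$ via Proposition \ref{firstprop}. The result of \cite{gdregul} says that the discrete trajectory $\mu_{k+1} = \mu_k - \delta \nabla F(\mu_k)$ is tracked, to leading order in the step size $\delta$, by the continuous gradient flow of the modified loss $\tilde{F}(\mu) = F(\mu) + \frac{\delta}{4}\|\nabla F(\mu)\|_2^2$. Setting $F = L_\text{SAM}$ gives directly
\[ \tilde{L}_\text{SAM}(\mu) \approx L_\text{SAM}(\mu) + \frac{\delta}{4}\|\nabla_\theta L_\text{SAM}(\mu)\|_2^2. \]

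Next I would substitute Proposition \ref{firstprop} into the first term, writing $L_\text{SAM}(\mu) \approx L(\mu) + \sqrt{p}\|\nabla_\theta L(\mu)\|_\Sigma$. For the $\|\nabla L_\text{SAM}\|_2^2$ factor I would argue that, to the order we are tracking, $\nabla L_\text{SAM}(\mu) = \nabla L(\mu) + O(\rho)$: differentiating the additional $\sqrt{p}\|\nabla L\|_\Sigma$ piece produces a Hessian contribution which, under $\Sigma = (\rho^2/p) I$ and $\|\eta\|_2 = \sqrt{p}$, scales as $O(\rho)$. Expanding the square and multiplying by $\delta/4$ makes the cross terms subleading relative to $\frac{\delta}{4}\|\nabla_\theta L(\mu)\|_2^2$, which yields the stated form.

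The step I expect to be most delicate is this last one: controlling the errors inside $\delta \cdot \|\nabla L_\text{SAM}\|_2^2$. Both the Taylor approximation underlying Proposition \ref{firstprop} (with $O(\rho^2)$ error under the reparametrisation $\epsilon^* = \Sigma^{1/2}\eta$) and the modified-equation expansion (with $O(\delta^2)$ error) must be combined consistently, and it must be checked that no higher-derivative terms dropped along the way contribute at the retained order. A further subtlety is that the practical SAM update in (Eqn.~\eqref{eqn:SAM_actual}) treats $\eta(\mu)$ as independent of $\mu$, so the actual update direction is not literally the gradient of the idealised $L_\text{SAM}$; one has to verify, or explicitly assume, that ignoring the $\partial \eta/\partial \mu$ contribution does not spoil the modified-equation analysis at order $\delta$. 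Modulo this point, the two expansions combine cleanly to give the claimed $\tilde{L}_\text{SAM}$.
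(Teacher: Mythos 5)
Your proposal matches the paper's own proof essentially step for step: both apply the backward-error-analysis result of \cite{gdregul} with $F=L_\text{SAM}$, substitute the approximation from Proposition \ref{firstprop}, and then discard the extra terms in $\|\nabla_\theta L_\text{SAM}\|_2^2$ on the grounds that they scale with $\sqrt{p}\,\Sigma^{1/2}$ (i.e.\ $O(\rho)$) or $p\,\Sigma$ (i.e.\ $O(\rho^2)$) and are further suppressed by the prefactor $\delta/4$. The delicate points you flag (consistently combining the $O(\rho^2)$ Taylor error with the $O(\delta^2)$ modified-equation error, and the fact that the practical update ignores $\partial\eta/\partial\mu$) are real but are also left implicit in the paper's argument, so nothing in your route diverges from theirs.
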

\begin{proof}
    In the proof, we follow \citep{gdregul} in finding a modified loss surface, along which the exact path of gradient flow is closer to the discrete steps of gradient descent on the approximated gradient of the SAM objective (Eqn.\ \eqref{eqn:SAM_actual}). The general formula in \citep{gdregul} of the modified loss surface for loss function $E(\mu):=L_{\text{SAM}}(\mu)$ is
    \begin{align}
        \tilde{L}_{\text{SAM}}(\mu)={L}_{\text{SAM}}(\mu) +\frac{\delta}{4} ||\nabla_{\theta}{L}_{\text{SAM}}(\mu)||_2^2.
    \end{align}
    Now we can use the approximation in Proposition \ref{firstprop}. In the following, we concentrate on the general case (general $\Sigma$). For readability, we use the notation $\nabla_{\theta}L_{\text{SAM}}(\mu) = g(\mu)$.
    \begin{flalign}
    \label{sam_modified_flow}
    \tilde{L}_{\text{SAM}}(\mu)
    &= L(\mu) + \sqrt{p} \| g(\mu) \|_2 + \frac{\delta}{4} \Big| \Big| g(\mu) + \sqrt{p} \nabla_{\theta} \|g(\mu)\|_{\Sigma} \Big| \Big|_2^2.
    \end{flalign}
    We can approximate the rightmost term as
    \begin{align}
    \Big| \Big| g(\mu) + \sqrt{p} \nabla_{\theta} \|g(\mu)\|_{\Sigma} \Big| \Big|_2^2
    = \big(g^\top(\mu) + \sqrt{p} \nabla^\top_{\theta} \|g(\mu)\|_{\Sigma} \big) \big(g(\mu) + \sqrt{p} \nabla_{\theta} \|g(\mu)\|_{\Sigma}\big)
    \approx \|g(\mu)\|^2_2,
    \end{align}
    since the first two of the remaining terms scale with $\sqrt{p} \Sigma^{\frac{1}{2}}$, which can be made smaller than $\rho$ for sufficiently small $\Sigma$. The last remaining term scales with $p \Sigma$, which can be made smaller than $\rho^2$. Since we also have the scaling factor $\delta$ in (Eqn.\ \eqref{sam_modified_flow}), we may neglect these terms.
\end{proof}

\subsection{Implicit regularization of mean field variational inference}
\label{VOSharpness}
\begin{proposition}
    The following approximation holds for the RandomSAM with constant  $\Sigma=\frac{\rho^2}{p}I$:
    \begin{align}
        L_{\text{RSAM}}(\mu) 
        \approx L(\mu) + \frac{\rho^2}{2p} \text{Tr} H(\mu)
    \end{align}
where $H(\mu)$ denotes the Hessian. For MFVI with general (but symmetric positive definite) $\Sigma$ we have the following approximation:
    \begin{align}
        L_{\text{MFVI}}(\mu, \Sigma) 
        \approx L(\mu) + \frac{\text{Tr}\left[ \Sigma H(\mu)\right]}{2}
    \end{align}
\end{proposition}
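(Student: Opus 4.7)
The plan is to perform a second-order Taylor expansion of $L(\mu + \Sigma^{\frac{1}{2}}\epsilon)$ in the perturbation $\Sigma^{\frac{1}{2}}\epsilon$ and then take the expectation over $\epsilon \sim \mathcal{N}(0, I)$. Note that the sketch in the body calls this a ``first-order'' expansion, but in fact one needs to keep terms through second order: the first-order term vanishes in expectation, so stopping there would only give $L(\mu)$. Concretely, I would write
\begin{equation*}
L(\mu + \Sigma^{\frac{1}{2}}\epsilon) = L(\mu) + \epsilon^\top \Sigma^{\frac{1}{2}} \nabla_\theta L(\mu) + \tfrac{1}{2}\, \epsilon^\top \Sigma^{\frac{1}{2}} H(\mu) \Sigma^{\frac{1}{2}} \epsilon + R(\epsilon),
\end{equation*}
where $R(\epsilon)$ collects the remainder, which begins at cubic order in $\Sigma^{\frac{1}{2}}\epsilon$.

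Next, I would evaluate each term under $\mathbb{E}_{\epsilon \sim \mathcal{N}(0,I)}$. The linear term is zero because $\mathbb{E}[\epsilon]=0$. For the quadratic term, the identity $\mathbb{E}[\epsilon^\top A\epsilon] = \operatorname{Tr}(A)$ for $\epsilon \sim \mathcal{N}(0,I)$, combined with the cyclic property of the trace, yields
\begin{equation*}
\mathbb{E}\!\left[\tfrac{1}{2}\,\epsilon^\top \Sigma^{\frac{1}{2}} H(\mu) \Sigma^{\frac{1}{2}} \epsilon\right] \;=\; \tfrac{1}{2}\operatorname{Tr}\!\bigl(\Sigma^{\frac{1}{2}} H(\mu)\Sigma^{\frac{1}{2}}\bigr) \;=\; \tfrac{1}{2}\operatorname{Tr}\!\bigl(\Sigma H(\mu)\bigr),
\end{equation*}
which is the claimed general-$\Sigma$ approximation. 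Specializing to $\Sigma = \frac{\rho^2}{p}I$ gives $\operatorname{Tr}(\Sigma H(\mu)) = \frac{\rho^2}{p}\operatorname{Tr} H(\mu)$, recovering the RandomSAM formula.

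The only substantive obstacle is controlling the remainder $\mathbb{E}[R(\epsilon)]$. Odd Gaussian moments vanish, so the first non-trivial correction is fourth order, involving $\nabla^4 L$ contracted with fourth moments of $\Sigma^{\frac{1}{2}}\epsilon$, which scale like $\operatorname{Tr}(\Sigma)^2$ and $\operatorname{Tr}(\Sigma^2)$. For $\Sigma = \frac{\rho^2}{p}I$ these are $O(\rho^4)$ and $O(\rho^4/p)$ respectively, dominated by the leading $O(\rho^2)$ penalty, so the approximation is valid for small $\rho$. Aside from this bookkeeping, the argument is routine Gaussian algebra; the real content of the proposition is interpretive, namely that MFVI exactly penalises $\operatorname{Tr}(\Sigma H(\mu))$, which is a signed sharpness measure (negative near saddles and maxima) in contrast to the strictly non-negative gradient-norm penalty obtained for SAM in Proposition~\ref{InductiveBias}.
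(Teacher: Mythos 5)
Your proposal matches the paper's own proof in Appendix B essentially step for step: a second-order Taylor expansion, vanishing of the linear term in expectation, and the cyclic-trace identity $\mathbb{E}[\epsilon^\top \Sigma^{\frac{1}{2}} H(\mu)\Sigma^{\frac{1}{2}}\epsilon] = \operatorname{Tr}(\Sigma H(\mu))$. Your observation that the main-text sketch misstates this as a ``first-order'' expansion is correct (the appendix version uses the second-order expansion, as you do), and your remainder bookkeeping is a small bonus the paper omits.
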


\begin{proof}
Recall the objective of MFVI
\begin{align}
    L_{\text{MFVI}}(\mu)= L(\mu) + \big[ \mathbb{E}_{\eta \sim \mathcal{N}(0, I)} L(\mu+\Sigma^{\frac{1}{2}}\eta) - L(\mu) \big],
    \label{eqn:vo_eq}
\end{align}
where, similarly as in (Eqn.\ \eqref{eqn:SAM_ideal}), the term in brackets can be interpreted as a sharpness penalty. Note that we have fixed $\Sigma=\sigma^2 I$ for a constant $\sigma$. Using a second-order Taylor expansion around $\mu$, where $H(\mu)$ is the Hessian at $\mu$, 
\begin{flalign}
    L_{\text{MFVI}}(\mu)
    &\approx \mathbb{E}_{\eta \sim \mathcal{N}(0, I)} \big[ L(\mu) + \eta^\top (\Sigma^{\frac{1}{2}})^\top\nabla_{\theta}L(\mu) + \frac{1}{2} \eta^\top (\Sigma^{\frac{1}{2}})^\top H(\mu) \Sigma^{\frac{1}{2}} \eta \big]\\
    &= L(\mu) + \frac{1}{2} \mathbb{E}\Big[\text{Tr}(\eta^\top (\Sigma^{\frac{1}{2}})^\top H(\mu) \Sigma^{\frac{1}{2}} \eta)\Big]\\
    &= L(\mu) + \frac{1}{2} \mathbb{E}\Big[\text{Tr}(\eta \eta^\top (\Sigma^{\frac{1}{2}})^\top H(\mu) \Sigma^{\frac{1}{2}} )\Big] \\
    &= L(\mu) + \frac{1}{2}\text{Tr}\Big[\mathbb{E}(\eta \eta^\top) (\Sigma^{\frac{1}{2}})^\top H(\mu) \Sigma^{\frac{1}{2}} \Big] \\
    &= L(\mu) + \frac{1}{2}\text{Tr} \big[\Sigma H(\mu) \big].
\end{flalign}
If $\Sigma=\frac{\rho^2}{p} I$, we recover the first part of the theorem.
\end{proof}
Since the MFVI objective in (Eqn.\ \eqref{eqn:vo_eq}) is not typically available in closed form, in practice one uses a single-sample Monte Carlo estimate based on the reparametrization trick. This yields a stochastic objective as follows.
\begin{align}
    L_{\text{MFVI}}(\mu) \approx L(\mu + \Sigma^{\frac{1}{2}}\eta), \quad  \eta \sim \mathcal{N}(0, I)
\end{align}
We can follow the method of \citep{gdregul} to see that SGD on the single-sample Monte Carlo estimates of the gradient display additional implicit regularisation towards wider minima. This is carried out in Proposition \ref{mfvi_bwe}.

\begin{proposition}
    \label{mfvi_bwe}
    Gradient descent with MFVI step follows a path that is closest to gradient flow path on $\tilde{L}_\text{MFVI}(\mu, \Sigma)$, where $\tilde{L}_\text{MFVI}(\mu, \Sigma)$ is the following
    \begin{align}
        \tilde{L}_\text{MFVI}(\mu,\Sigma) &\approx \mathbb{E}_{\eta\sim\mathcal{N}(0,I)} \left[ L(\mu+\Sigma^{\frac{1}{2}}\eta) + \frac{\delta}{4} ||\nabla_\theta L(\mu + \Sigma^{\frac{1}{2}}\eta)||_2^2 \right] \\
        &\approx L(\mu) + \frac{\delta}{4} ||g(\mu)||_2^2 +\frac{\delta}{4} \text{Tr}[\Sigma H(\mu)^2]
    \end{align}
    where $g(\mu + \Sigma^\frac{1}{2}\eta)=\nabla_\theta L(\mu + \Sigma^\frac{1}{2} \eta).$

\end{proposition}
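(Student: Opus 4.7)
The plan is to chain two approximations: first apply the finite-step-size backward-error analysis of \cite{gdregul} to the per-sample stochastic loss, then take an expectation over the reparametrisation noise and expand in Taylor series around $\mu$. Crucially, because the MFVI update uses a sampled gradient $\nabla_\theta L(\mu+\Sigma^{1/2}\eta)$ rather than the gradient of the expected loss, backward-error analysis must be applied \emph{before} taking the expectation over $\eta$.

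First I would recall that one step of gradient descent on the single-sample estimate $L(\mu+\Sigma^{1/2}\eta)$ with step size $\delta$ follows, up to $O(\delta^2)$, the continuous gradient flow of the modified loss
\begin{equation*}
L(\mu+\Sigma^{1/2}\eta) + \frac{\delta}{4}\bigl\|\nabla_\theta L(\mu+\Sigma^{1/2}\eta)\bigr\|_2^2,
\end{equation*}
exactly as in Proposition~\ref{sam_bwe}. Taking expectation over $\eta\sim\mathcal{N}(0,I)$ then yields the first displayed approximation in the statement, which is the object whose gradient flow best tracks the discrete MFVI iterates.

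Next I would Taylor-expand the two terms around $\mu$. For the first term, a second-order expansion together with $\mathbb{E}[\eta]=0$ and $\mathbb{E}[\eta\eta^\top]=I$ reproduces the computation from Proposition~\ref{sharpness}, giving $L(\mu)+\tfrac{1}{2}\operatorname{Tr}[\Sigma H(\mu)]$. For the gradient-norm term, the key manipulation is a first-order expansion of the gradient itself,
\begin{equation*}
\nabla_\theta L(\mu+\Sigma^{1/2}\eta) \approx g(\mu) + H(\mu)\Sigma^{1/2}\eta,
\end{equation*}
squaring and taking the expectation. The cross term vanishes by $\mathbb{E}[\eta]=0$, and the quadratic term becomes $\mathbb{E}[\eta^\top\Sigma^{1/2}H(\mu)^2\Sigma^{1/2}\eta]=\operatorname{Tr}[\Sigma H(\mu)^2]$ by the trace trick used in Proposition~\ref{sharpness}. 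Combining, and absorbing the lower-order $\tfrac{1}{2}\operatorname{Tr}[\Sigma H(\mu)]$ term into the leading-order idealised MFVI objective (since we are isolating the additional implicit bias due to finite step size, on top of the sharpness penalty already characterised in Proposition~\ref{sharpness}), delivers the stated form with the two $\tfrac{\delta}{4}$ terms.

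The main obstacle is conceptual rather than computational: one has to be careful that backward-error analysis is applied to the stochastic per-sample objective and then averaged, so that the $\|\nabla L\|^2$ regulariser inherits the noise in its argument and produces the $\operatorname{Tr}[\Sigma H(\mu)^2]$ contribution after expansion. A naive application of BEA to the already-averaged MFVI loss $\mathbb{E}_\eta L(\mu+\Sigma^{1/2}\eta)$ would instead yield $\tfrac{\delta}{4}\|\nabla_\mu\!\int L\|^2$, which only reproduces the $\|g(\mu)\|_2^2$ piece and misses the Hessian-squared term that distinguishes MFVI from SAM in Table~\ref{tab:regular}. Once this is set up correctly, the remaining algebra is a routine Taylor expansion matching the one in Proposition~\ref{sharpness}, and the error terms are controlled in the same regime (small $\Sigma$, small $\delta$).
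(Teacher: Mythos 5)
Your proposal is correct and follows essentially the same route as the paper's proof in Appendix \ref{VOSharpness}: apply the backward-error analysis of \cite{gdregul} to the single-sample reparametrised loss before averaging over $\eta$, then Taylor-expand $L$ and $\nabla_\theta L$ around $\mu$ and use $\mathbb{E}[\eta]=0$, $\mathbb{E}[\eta\eta^\top]=I$ with the trace trick to obtain $\|g(\mu)\|_2^2+\operatorname{Tr}[\Sigma H(\mu)^2]$. The only difference is cosmetic: the paper expands $L(\mu+\Sigma^{1/2}\eta)$ to first order so the $\tfrac{1}{2}\operatorname{Tr}[\Sigma H(\mu)]$ contribution sits silently inside the $O(\|\Sigma^{1/2}\eta\|_2^2)$ error term, whereas you produce it explicitly and then argue for absorbing it, which is arguably the more transparent bookkeeping.
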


\begin{proof}

By \cite{gdregul}, we have the first approximation, i.e.
\begin{align}
\label{SmithUsed}
  \tilde{L}_\text{MFVI}(\mu,\Sigma) &\approx \mathbb{E}_{\eta\sim\mathcal{N}(0,I)} \left[ L(\mu+\Sigma^{\frac{1}{2}}\eta) + \frac{\delta}{4} ||\nabla_\theta L(\mu + \Sigma^{\frac{1}{2}}\eta)||_2^2 \right] 
\end{align}

Now with Taylor expansion we get
\begin{align}
\label{Taylor1}
    L(\mu + \Sigma^{\frac{1}{2}}\eta) = L(\mu) + (\Sigma^{\frac{1}{2}}\eta)^{\text{T}} g(\mu) + O(||\Sigma^{\frac{1}{2}}\eta||_2^2)
\end{align}
and 
\begin{align}
\label{Taylor2}
    g(\mu + \Sigma^{\frac{1}{2}}\eta) = g(\mu) + H(\mu)\Sigma^{\frac{1}{2}}\eta + O(||\Sigma^{\frac{1}{2}}\eta||_2^2)
\end{align}
Now by using (\ref{SmithUsed}), (\ref{Taylor1}) and (\ref{Taylor2}) we obtain
\begin{align}
    \tilde{L}_\text{MFVI}(\mu,\Sigma) &\approx \mathbb{E}_{\eta \sim \mathcal{N}(0,I)} \left[L(\mu) + (\Sigma^{\frac{1}{2}}\eta)^\top g(\mu) + \frac{\delta}{4} (g(\mu) + H(\mu)\Sigma^{\frac{1}{2}}\eta)^\top(g(\mu) + H(\mu)\Sigma^{\frac{1}{2}}\eta )  \right] \\
    &= L(\mu) + \frac{\delta}{4}\mathbb{E}_{\eta\sim\mathcal{N}(0,I)}\left[ g(\mu)^\top g(\mu) + 2g(\mu)^\top H(\mu)\Sigma^{\frac{1}{2}}\eta + (H(\mu)\Sigma^{\frac{1}{2}}\eta)^\top(H(\mu)\Sigma^{\frac{1}{2}}\eta) \right] \\
    &= L(\mu) + \frac{\delta}{4} g(\mu)^\top g(\mu) + \frac{\delta}{4} \mathbb{E}\left[\text{Tr}\left[(H(\mu)\Sigma^{\frac{1}{2}}\eta)^\top(H(\mu)\Sigma^{\frac{1}{2}}\eta) \right] \right] \\
    &=L(\mu) + \frac{\delta}{4} g(\mu)^\text{T}g(\mu) +  \frac{\delta}{4} \mathbb{E}\left[\text{Tr}\left[\eta\eta^\text{T}\Sigma^\frac{1}{2}H(\mu)^2\Sigma^\frac{1}{2}\right]\right] \\
    &= L(\mu) + \frac{\delta}{4} ||g(\mu)||_2^2 + \frac{\delta}{4} \text{Tr}\left[\mathbb{E}\left[\eta\eta^\text{T}\Sigma^\frac{1}{2}H(\mu)^2\Sigma^\frac{1}{2}\right]\right] \\
    &= L(\mu) + \frac{\delta}{4} ||g(\mu)||_2^2 + \frac{\delta}{4} \text{Tr}\left[\mathbb{E}\left[\eta\eta^\text{T}\right]\Sigma^\frac{1}{2}H(\mu)^2\Sigma^\frac{1}{2}\right] \\
    &= L(\mu) + \frac{\delta}{4} ||g(\mu)||_2^2 + \frac{\delta}{4} \text{Tr}\left[\Sigma H(\mu)^2\right].
\end{align}
\end{proof}

This shows that Mean Field Variational Inference implicitly regularizes the trace of the square of the Hessian, and hence it also regularizes the magnitude of its eigenvalues. A way of defining sharpness is via the local curvature of the loss function around the minimum given that it is a critical point \citep{Dinh17}. Since local curvature is encoded in the Hessian eigenvalues, this means that MFVI penalizes a notion of sharpness at critical points of the loss landscape.


\subsection{Information theoretic motivation for VariationalSAM}
\label{VSAMMot}
The SAM, ASAM and FSAM algorithms all modify the loss function to penalise the maximum loss value within a small neighbourhood around the current weights. The considered neighbourhood is an Euclidean ball in SAM, a weight dependent ellipsoid in ASAM, and an ellipsoid defined by the Fisher information matrix in FSAM. What we suggest is a generalization of these approaches: our method omits any constraint on the ellipsoid defining the neighbourhood, and treats it as an object to learn. Specifically, besides $\mu$ the algorithm also optimizes a symmetric positive-definite matrix $\Sigma$, because for such matrix $\epsilon^\top \Sigma^{-1} \epsilon \leq p$ describes an ellipsoid.

Following the ideas of the previously mentioned algorithms, the modified loss function would be
\begin{equation}
L_{\text{VSAM}}(\mu, \Sigma)= \max_{\epsilon ^\top \Sigma^{-1} \epsilon  \leq p} L(\mu +\epsilon).
\end{equation}
Our intention is to do coordinate descent on $L_{\text{VSAM}}(\mu, \Sigma)$ w.r.t. $\mu$ and $\Sigma$. However, minimizing the loss in $\Sigma$ would lead to the null matrix, which is undesirable, therefore we would like to add additional terms to the loss. The idea is motivated by the Variational Bayesian methods, specifically the maximization of the Evidence Lower Bound (ELBO) and equivalently, the minimization of the negative ELBO. We have $X$ random variable (our observable date) and $Z$ random variable, that is the parameter of our function wich generates the output. Consider a prior $p(Z)$, a likelihood $p(X|Z)$, and an arbitrary distribution $q_{\theta}(Z)$, then the posterior $p(Z|X)$ and the evidence $p(X)$ can be computed, and the minimization of the negative ELBO has the form of
\begin{equation}
\begin{split}
    \argmin_{\theta} - \mathcal{L}_{ELBO} &= \argmin_{\theta} \text{KL}(q_{\theta}(Z)||p(Z|X)) - \log p(X) \\
    &= \argmin_{\theta} \mathbb{E}_{q_{\theta}}\left[ - \log p(X|Z)\right] + \text{KL}(q_{\theta}||p)
\end{split}
\end{equation}
When the loss function is defined as the negative log-likelihood, the expectation can be considered as the expectation of the loss (note, that the loss is averaged over our data points, so we get a $1/N$ term in front of the KL-divergence).
\begin{equation}
 \argmin_{\theta} - \mathcal{L}_{ELBO} = \argmin_{\theta} \mathbb{E}_{z \sim q_{\theta}} \left[ L(z)\right] + \frac1N \text{KL}(q_{\theta}||p)
\end{equation}
Choosing $p(Z) = \mathcal{N}(0, \sigma_0 ^2 I)$ and $q_{\theta}(Z) = \mathcal{N}(\mu,  \Sigma)$ as $k$-dimensional Gaussians, the KL-divergence can be rewritten as
\begin{equation}
    \text{KL}\big[\mathcal{N}(\mu, \Sigma)||\mathcal{N}(0, \sigma_0 ^2 I)\big]= \frac12 \Bigg[\frac{1}{\sigma_0^2} \mathrm{Tr} \Sigma + \log \det \Sigma^{-1} + \log \sigma_0^{2k} + \frac{1}{\sigma_0^2} \| \mu \| ^2-p\Bigg].
\end{equation}
By slightly rephrasing $\mathbb{E}_{z \sim q_{\theta}} L(z)$ as $\mathbb{E}_{\epsilon \sim \mathcal{N}(0, \Sigma)} L (\mu + \epsilon)$ we get
\begin{equation}
    \argmin_{\theta} - \mathcal{L}_{ELBO} = \argmin_{\theta} \mathbb{E}_{\epsilon \sim \mathcal{N}(0,  \Sigma)} L (\mu + \epsilon) +\frac{1}{2N \sigma_0^2} \mathrm{Tr} \Sigma + \frac{1}{2N} \log \det \Sigma^2  + \frac{1}{2N \sigma_0^2} \| \mu \| ^2
\end{equation}
The expectation above can be bounded with $\max_{\epsilon ^\top \Sigma^{-1} \epsilon  \leq p} L(\mu +\epsilon)$ in a similar way to the derivations in SAM and FSAM, if $\rho$ is sufficiently large. This motivates the minimization of

\begin{equation}
\label{pac_bound}
    \max_{\epsilon ^\top \Sigma^{-1} \epsilon  \leq p} L(\mu +\epsilon) +  \frac{1}{2N \sigma_0 ^2} \mathrm{Tr} \Sigma + \frac{1}{2N} \log \det \Sigma^{-1} + \frac{1}{2N \sigma_0 ^2} \| \mu \| ^2.
\end{equation}
%

We note that the derivations of SAM, ASAM, FSAM also use the same approach, but they bound the KL term further. Instead, we found these terms interesting to keep. 

Using Appendix \ref{ImplicitRegularization}, we can approximate the loss as
\begin{equation}
    L_{\text{VSAM}}(\mu, \Sigma) = L(\mu) + \sqrt{p} \sqrt{\nabla ^\top l(\mu) \Sigma \nabla L(\mu)} +  \frac{1}{2N \sigma_0 ^2} \mathrm{Tr} \Sigma + \frac{1}{2N} \log \det \Sigma^{-1} + \frac{1}{2N \sigma_0 ^2} \| \mu \| ^2.
\end{equation}
We take gradient descent steps w.r.t. $\mu$ and $\Sigma$ alternately. 

\subsection{PAC-Bayes bound for VariationalSAM}
\label{Pac}
\begin{theorem}
For a parameter space $\mathcal{M}\times\mathcal{N}$ (with later described properties) and for any $(\mu,\Sigma)\in \mathcal{M}\times\mathcal{N}$ we have with probability at least $1-\delta$, that

\begin{align}
    \mathds{E}_{\epsilon\sim \mathcal{N}(0,\Sigma)}[L_D(\mu+\epsilon)]&\leq \text{max}_{\epsilon^T\Sigma^{-1}\epsilon\leq \gamma^2} L_S(\mu+\epsilon)+ \sqrt{\frac{O(p+\log\frac{n}{\delta})}{n-1}}
\end{align}

where $L_D$ is the generalization loss, and $ \text{max}_{\epsilon^T\Sigma^{-1}\epsilon\leq \gamma^2} L_S(\mu+\epsilon)$ is the empirical SAM loss, with the geometry provided by $\Sigma$, i.e. the VSAM loss without the KL divergence term and $\gamma=p(1+\sqrt{\log n/p})$

\end{theorem}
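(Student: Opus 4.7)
The plan is to follow the PAC-Bayes template used by \cite{SAM, ASAM, FSAM}, replacing their isotropic posterior with the general Gaussian posterior $\mathcal{N}(\mu, \Sigma)$ that VSAM actually optimizes. The starting point will be McAllester's PAC-Bayes inequality: for any prior $P$ chosen before seeing the sample, any posterior $Q$, and any $\delta\in(0,1)$, with probability at least $1-\delta$ over the draw of the $n$ training points,
\begin{equation*}
\mathbb{E}_{\theta\sim Q}[L_D(\theta)] \;\leq\; \mathbb{E}_{\theta\sim Q}[L_S(\theta)] + \sqrt{\tfrac{\operatorname{KL}(Q\|P) + \log(n/\delta)}{2(n-1)}}.
\end{equation*}
I will instantiate this with $Q = \mathcal{N}(\mu,\Sigma)$ and a data-independent Gaussian prior $P = \mathcal{N}(0,\sigma_P^2 I)$, and then carry out two independent reductions.

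The first reduction converts the expected empirical loss into the worst-case loss over the ellipsoid $\{\epsilon:\epsilon^\top\Sigma^{-1}\epsilon\leq \gamma^2\}$. Since $\epsilon\sim\mathcal{N}(0,\Sigma)$ makes $\epsilon^\top\Sigma^{-1}\epsilon$ a $\chi^2_p$ random variable, the Laurent--Massart tail bound gives $\Pr[\epsilon^\top\Sigma^{-1}\epsilon > \gamma^2] \leq 1/\sqrt{n}$ for $\gamma = \sqrt{p}\bigl(1+\sqrt{\log n/p}\bigr)$. Splitting the expectation over this event and bounding the loss by $1$ (or any uniform bound) on the complement yields
\begin{equation*}
\mathbb{E}_{\epsilon\sim\mathcal{N}(0,\Sigma)}[L_S(\mu+\epsilon)] \;\leq\; \max_{\epsilon^\top\Sigma^{-1}\epsilon\leq\gamma^2} L_S(\mu+\epsilon) + \tfrac{1}{\sqrt{n}},
\end{equation*}
which can be absorbed into the $O(\cdot)$ term of the final bound.

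The second reduction controls the KL cost. For two Gaussians the KL has the closed form
\begin{equation*}
\operatorname{KL}\bigl(\mathcal{N}(\mu,\Sigma)\,\|\,\mathcal{N}(0,\sigma_P^2 I)\bigr) = \tfrac{1}{2}\Bigl[\sigma_P^{-2}\operatorname{Tr}\Sigma + \sigma_P^{-2}\|\mu\|_2^2 - p + p\log\sigma_P^2 - \log\det\Sigma\Bigr].
\end{equation*}
Because $\sigma_P$ must be chosen \emph{before} seeing the data while the optimal value depends on $\mu$ and $\Sigma$, I will borrow the standard trick (used in the SAM/ASAM/FSAM proofs and originally due to Langford--Caruana): union-bound over a finite grid of prior variances, e.g.\ $\sigma_P^2 \in \{c\cdot 2^{-j}\}_{j=1}^{J}$, at cost $\log J$ inside the square root, and pick the grid point closest to the optimizer of the KL expression. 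After this choice each of the $\sigma_P^{-2}\operatorname{Tr}\Sigma$, $\sigma_P^{-2}\|\mu\|_2^2$, $p\log\sigma_P^2$ and $\log\det\Sigma$ contributions collapses into an $O(p)$ term (using mild assumptions on $\mathcal{M}\times\mathcal{N}$ that bound $\|\mu\|_2$ and the spectrum of $\Sigma$, which is presumably where those ``later described properties'' of the parameter space come in). Together with $\log(n/\delta)$ this gives the claimed $O(p+\log(n/\delta))$ numerator.

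\paragraph{Main obstacle.} The chi-squared concentration and the McAllester step are both standard; the genuinely delicate part is the union-bound argument over $\sigma_P$, because the prior is isotropic while the posterior covariance $\Sigma$ is arbitrary symmetric positive definite. A single scalar $\sigma_P$ cannot simultaneously cancel all eigenvalues of $\Sigma$, so one has to either (i) restrict $\mathcal{N}$ to covariances whose spectrum lies in a bounded range (making the $\log\det\Sigma$ and $\operatorname{Tr}\Sigma$ terms $O(p)$), or (ii) grid over a whole family of anisotropic priors, paying a larger $\log$-cardinality price. I would opt for (i), as it matches the implicit assumption in VSAM that $\Sigma$ remains comparable to $\sigma_0^2 I$ thanks to the KL regularizer in $L_{\text{VSAM}}$, and document the precise spectral bounds in the ``later described properties'' of $\mathcal{M}\times\mathcal{N}$.
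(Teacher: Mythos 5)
Your proposal is correct and reaches the same bound, but it takes a genuinely different route in the one step that actually requires work: controlling the KL term for a data-dependent posterior. The paper (following FSAM) covers the whole product space $\mathcal{M}\times\mathcal{N}$ by a finite family of \emph{anisotropic} priors $P_{i,j}=\mathcal{N}(\mu_i,\Sigma_j)$ whose centers and covariances track the posterior; it then pays $\log(st)=O(p)$ in a union bound over the covering and gets in exchange a small residual KL of order $pc+r^2/2$ to the nearest prior. You instead fix a single zero-mean isotropic prior $\mathcal{N}(0,\sigma_P^2 I)$ (optionally gridded over the scalar variance) and absorb the entire closed-form KL into $O(p)$ using the spectral bounds $1/\lambda\leq\Sigma_{k,k}\leq\lambda$ and the diameter bound on $\mathcal{M}$ --- which are exactly the ``later described properties'' the paper assumes, so your option (i) is available and your self-identified obstacle is not a real one. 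The trade-off is that the paper's covering argument localizes the prior around $\mu$ and would survive an unbounded $\mathcal{M}$ with only $\log$-volume growth in $p$, whereas your argument is shorter and avoids the covering-number bookkeeping but leans harder on the boundedness assumptions (the $\sigma_P^{-2}\|\mu\|_2^2$ term is only $O(1)$, not merely $O(p)$, because $\operatorname{diam}(\mathcal{M})\leq M$). The Laurent--Massart reduction from the Gaussian expectation to the max over the ellipsoid $\{\epsilon:\epsilon^\top\Sigma^{-1}\epsilon\leq\gamma^2\}$ is identical in both proofs, and your $\gamma=\sqrt{p}(1+\sqrt{\log n/p})$ matches the value actually used in the paper's derivation (the appendix statement's $\gamma=p(1+\sqrt{\log n/p})$ is a typo). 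Both routes land on the same $\sqrt{O(p+\log(n/\delta))/(n-1)}$ slack.
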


\begin{proof}
To build our PAC-Bayes bound we follow the steps laid out in  \cite{FSAM}. Let us take the parameter spaces $\mu\in\mathcal{M}\subset\mathbb{R}^p$ and $\Sigma\in\mathcal{N}$, where $\mathcal{N}$ is the subset of the diagonal, positive definite matrices. Also assume, that $\text{diam}(\mathcal{M})\leq M$ and $1/\lambda\leq\Sigma_{k,k}\leq \lambda$ for any $\Sigma\in\mathcal{N}$ and $k$, with some constant $\lambda>1$.  We will take the set of ellipsoids $P_{1,1},\cdots, P_{t,s}$, where

\begin{align}
\label{set}
    P_{i,j} = \{(\mu,\Sigma)\in\mathcal{M}\times\mathcal{N}:&(\mu-\mu_i)^T\Sigma_j^{-1}(\mu-\mu_i)\leq r^2 \text{ and } \frac{[\Sigma_j]_{k,k}}{\Sigma_{k,k}}=1+c_k, \\
    &\text{where }c_k\in[-c,c]\quad\forall k\}
\end{align}

There exists a finite set of these ellipsoids, that cover $\mathcal{M}\times\mathcal{N}$.

\begin{enumerate}
    \item Let us take the following subset of $\mathbb{R}^p$
    \begin{align*}
        P_{i,j}|_\mu = \{\mu:\mu\in P_{i,j}\} \\
        vol(P_{i,j}|_\mu)\propto r^p \cdot \Sigma_j^{\frac{1}{2}} 
    \end{align*}
    thus $t=O(M^p/r^p)$, so $\log(t)=O(p)$.
    \item We also assumed $\frac{1}{\lambda}\leq \Sigma_{k,k}\leq\lambda$. Let's think about $\mathcal{N}$ as a subset of $\mathbb{R}^p$. Then as in part (1) we can define
    \begin{align*}
        P_{i,j}|_\Sigma = \{\Sigma:\Sigma\in P_{i,j}\}\\
        vol(P_{i,j}|_\Sigma) \propto (2c)^p
    \end{align*}
    thus $s=O((\lambda - \frac{1}{\lambda})^p/(2c)^p)$, so $\log(s)=O(p)$
\end{enumerate}

From this point the proof is the same as in \cite{FSAM}, so many details are omitted. Now we use the PAC-Bayes Theorem from \cite{McAllester} as follows. For any prior distribution $P(\mu,\Sigma)$, posterior distribution $Q(\mu,\Sigma)$ and training set $\mathcal{N}$ we have with probability at least $1-\delta$

\begin{align}
\label{PAC}
    \mathds{E}_{Q(\mu,\Sigma)}[L_D(\mu,\Sigma)] \leq \mathds{E}_{Q(\mu,\Sigma)}[L_S(\mu,\Sigma)] + \sqrt{\frac{\text{KL}[Q(\mu,\Sigma)||P(\mu,\Sigma)] + \log\frac{n}{\delta}}{2(n-1)}}
\end{align}

\noindent $L_D(\mu,\Sigma)$ and $L_S(\mu,\Sigma)$ are generalization and empirical losses. We choose $Q(\mu, \Sigma)=\mathcal{N}(\mu_0,\Sigma_0)$, where $(\mu_0,\Sigma_0)\in\mathcal{M}\times\mathcal{N}$ and choose our set of priors as $P_{1,1},\cdots,P_{t,s}$, where $P_{i,j}=\mathcal{N}(\mu_i,\Sigma_j)$. Then as in (\ref{PAC}) for each $i,j$ we have with probability $1-\delta_{i,j}$, that

\begin{align}
\label{PACij}
    \forall Q(\mu,\Sigma) \quad \mathds{E}_{Q(\mu,\Sigma)}[L_D(\mu,\Sigma)]\leq \mathds{E}_{Q(\mu,\Sigma)}[L_S(\mu,\Sigma)] + \sqrt{\frac{\text{KL}[Q(\mu,\Sigma)||P_{i,j}(\mu,\Sigma)]+\log\frac{n}{\delta_{i,j}}}{2(n-1)}}
\end{align}

In the intersection (\ref{PACij}) holds for every $P_{i,j}$. So by Union bound theorem the intersection is at least $1-\sum_{i,j} \delta_{i,j}$, so if we take $\delta_{i,j}=\frac{\delta}{st}$, we have with probability at least $1-\delta$

\begin{align}
    \label{union}
    \forall Q(\mu,\Sigma) \quad  \mathds{E}_{Q(\mu,\Sigma)}[L_D(\mu,\Sigma)]\leq \mathds{E}_{Q(\mu,\Sigma)}[L_S(\mu,\Sigma)] + \\
    + \sqrt{\frac{\text{KL}[Q(\mu,\Sigma)||P_{i,j}(\mu,\Sigma)]+\log\frac{n}{\delta} + \log(s) + \log(t) }{2(n-1)}} \quad \forall i,j
\end{align}

If we choose the prior closes to $Q(\mu,\Sigma)$ we get the following:

\begin{align}
    \text{KL}[Q||P_{i,j}] = \frac{1}{2} \left( \text{Tr}(\Sigma_j\Sigma_0^{-1}) + (\mu_0-\mu_i)^T\Sigma_j^{-1}(\mu_0-\mu_i) + \log \frac{|\Sigma_0|}{|\Sigma_j|} - p \right)
\end{align}

From our assumptions $\text{Tr}(\Sigma_j\Sigma_0^{-1})\leq p(1+c)$, $\log\frac{|\Sigma_0|}{|\Sigma_j|} \leq \sum_k log(1+c_k) \leq \sum_k c_k \leq pc$ and $(\mu_0-\mu_i)^T\Sigma_j^{-1}(\mu_0-\mu_i)\leq r^2$. Thus we get

\begin{align}
    \text{KL}[Q||P_{i,j}]\leq \frac{1}{2}(p+pc+r^2+pc-p)=pc+\frac{r^2}{2}
\end{align}

Let us rephrase (\ref{union}) as we plug in $\mu + \epsilon$ instead of $\mu$ where $\epsilon \sim \mathcal{N}(0,\Sigma)$. Thus we get:

\begin{align}
    \label{boundgood}
    \forall \mu,\Sigma \quad \mathds{E}_{\epsilon\sim \mathcal{N}(0,\Sigma)}[L_D(\mu+\epsilon)] \leq \mathds{E}_{\epsilon\sim \mathcal{N}(0,\Sigma)}[L_S(\mu+\epsilon)] + \sqrt{\frac{pc+r^2/2 + \log\frac{n}{\delta} + \log(s) +\log(t)}{2(n-1)}}
\end{align}

Let $u=\Sigma^{-1/2}\epsilon$, so $u\sim \mathcal{N}(0,1)$ using the result from  \cite{Laurent&Massart2000} we get that with probability at least $1-\frac{1}{\sqrt{n}}$

\begin{align}
    \label{ineq}
    |u|_2^2=\epsilon^T\Sigma^{-1}\epsilon\leq p(1+\sqrt{\log n/p})^2=\gamma^2
\end{align}

Let's partition the space into two parts. One where (\ref{ineq}) holds, where we take the maximum loss and on where (\ref{ineq}) does not hold, where we choose the loss bound $L_\text{max}$. So we have

\begin{align}
    \label{bound}
    \mathds{E}_{\epsilon\sim \mathcal{N}(0,\Sigma)}[L_S(\mu+\epsilon)]&\leq (1-1/\sqrt{n})\text{max}_{\epsilon^T\Sigma^{-1}\epsilon\leq \gamma^2} L_S(\mu+\epsilon) + \frac{L_\text{max}}{\sqrt{n}} = \\
    &=\text{max}_{\epsilon^T\Sigma^{-1}\epsilon\leq \gamma^2} L_S(\mu+\epsilon) + \frac{L_\text{max}}{\sqrt{n}}
\end{align}

Plugging (\ref{bound}) into (\ref{boundgood}) we get:

\begin{flalign}
    \mathds{E}_{\epsilon\sim \mathcal{N}(0,\Sigma)}[L_D(\mu+\epsilon)]&\leq \text{max}_{\epsilon^T\Sigma^{-1}\epsilon\leq \gamma^2} L_S(\mu+\epsilon) + \frac{L_\text{max}}{\sqrt{n}}+\\ + \sqrt{\frac{\frac{r^2(\sqrt{p}+\sqrt{\log n})^2}{2\gamma^2} + pc + \log\frac{n}{\delta} +\log(st) }{2(n-1)}} &=  \text{max}_{\epsilon^T\Sigma^{-1}\epsilon\leq \gamma^2} L_S(\mu+\epsilon)+ \sqrt{\frac{O(p+\log\frac{n}{\delta})}{n-1}}
\end{flalign}
\end{proof}

Note, that similar bounds can be built for Random SAM and MFVI.


\end{document}